\newtheorem{example}{Example}
\newtheorem{lemma}{Lemma}
\mathchardef\mhyphen="2D
\newcommand{\argmax}{\operatorname*{\mathrm{arg\,max}}}
\newcommand{\argmin}{\operatorname*{\mathrm{arg\,min}}}
\newcommand{\topcaption}{%
\setlength{\abovecaptionskip}{6pt}%
\caption}  
\renewcommand{\d}{\mathrm{d}}
\newcommand{\Diff}[2]{\frac{\d#1}{\d#2}}
\newcommand{\Del}[2]{\frac{\partial#1}{\partial#2}}
\newcommand{\thetahat}{\hat{\theta}}
\title{Predict+Optimize for Packing and Covering LPs with Unknown Parameters in Constraints
}
\author{
  Xinyi Hu \\
  Department of Computer Science and Engineering\\
  The Chinese University of Hong Kong\\
  Hong Kong\\
  \texttt{xyhu@cse.cuhk.edu.hk} \\
   \And
  Jasper C.H. Lee \\
  Department of Computer Sciences \\
  Institute for Foundations of Data Science\\
  University of Wisconsin–Madison \\
  WI, USA\\
  \texttt{jasper.lee@wisc.edu} \\
  \And
  Jimmy H.M. Lee \\
  Department of Computer Science and Engineering\\
  The Chinese University of Hong Kong\\
  Hong Kong\\
  \texttt{jlee@cse.cuhk.edu.hk} \\
}
\begin{document}
\maketitle

\begin{abstract}
Predict+Optimize is a recently proposed framework which combines machine learning and constrained optimization, tackling optimization problems that contain parameters that are unknown at solving time.
The goal is to predict the unknown parameters and use the estimates to solve for an estimated optimal solution to the optimization problem.
However, all prior works have focused on the case where unknown parameters appear only in the optimization objective and not the constraints, for the simple reason that if the constraints were not known exactly, the estimated optimal solution might not even be feasible under the true parameters.
The contributions of this paper are two-fold.
First, we propose a novel and practically relevant framework for the Predict+Optimize setting, but with unknown parameters in both the objective and the constraints.
We introduce the notion of a correction function, and an additional penalty term in the loss function, modelling practical scenarios where an estimated optimal solution can be modified into a feasible solution after the true parameters are revealed, but at an additional cost.
Second, we propose a corresponding algorithmic approach for our framework, which handles all packing and covering linear programs.
Our approach is inspired by the prior work of Mandi and Guns, though with crucial modifications and re-derivations for our very different setting.
Experimentation demonstrates the superior empirical performance of our method over classical approaches.
\end{abstract}


\section{Introduction}


Constrained optimization problems are ubiquitous in daily life, yet, they often contain parameters that are unknown at solving time.
As an example, retail merchants wish to optimize their stocking of products in terms of revenue and cost, and yet the precise demands for each product are not known ahead of time.
The goal, then, is to 1) predict the unknown parameters and 2) solve the optimization problem using these predicted parameters, in the hopes that the estimated solution is good even under the true parameters revealed later on.
The classical approaches would learn a predictor for these unknown parameters using losses like the mean squared error, which are independent of the optimization at hand.
However, a small error for the predicted parameters in the parameter space does not necessarily guarantee a high solution quality evaluated under the true parameters.
The recent framework of Predict+Optimize by Elmachtoub and Grigas~\cite{elmachtoub2017smart,elmachtoub2022smart} proposes to instead use the more effective \emph{regret function} as the loss function, capturing the difference in objective between the estimated and true optimal solutions, both evaluated using the true parameters.

A number of prior works~\cite{wilder2019melding,elmachtoub2020decision,ali2022divide} have developed algorithmic implementations of this framework on a variety of classes of optimization problems.
Yet, all the prior works have focused on the case where only the optimization objective contains unknown parameters, and never the constraints.
This is for a simple technical reason: if we had used some predicted parameters to solve for an estimated solution, the solution might not even be feasible under the true parameters!
On the other hand, some application scenarios allow for post-hoc correction of an estimated solution into a feasible solution after the true parameters are revealed, potentially at additional cost or penalty.
Using the product stocking example again, a hard constraint is the available warehouse space, which needs to be predicted, depending on how well the already-bought products sell.
If a merchant buys in excess of the available space, they always have the option to throw away some of the newly-bought products, which would involve 1) paying a disposal company as well as 2) losing out on the profit of the thrown-away products as a ``penalty".

The contributions of this paper are two-fold.
First, we capture the above intuition and significantly generalize the Predict+Optimize framework (Section~\ref{sec:CR}), allowing us to address optimization problems with unknown parameters in both the objective and the constraints.
Specifically, we introduce the notion of a correction function, and modify the definition of regret to take into account the post-doc correction of a solution, and the associated cost and penalty.
Second, we propose an algorithmic implementation for this novel framework as applied to packing and covering linear programs (LPs), a well-studied and significant class of practically relevant optimization problems.
We give a general correction function for packing and covering LPs, and demonstrate how to learn a predictor in this setting using an approach inspired by the work of Mandi and Guns~\cite{mandi2020interior}.
We also apply our approach on 3 benchmarks to demonstrate the superior empirical performance of our method over classic learning algorithms\footnote{We allow estimated solutions to be corrected also for these classic learning algorithms, but the training itself just uses the original loss function, which is oblivious to any potential correction.}.


\section{Background}
\label{sec:background}

In this section, we describe the formulation of Predict+Optimize as it appears in prior works, on problems with unknown parameters appearing only in the objective.
The theory is stated in terms of minimization but applies of course also to maximization, upon appropriate negation.

An \emph{optimization problem} $P$ is defined as finding 
\[
    \textstyle{x^* = \argmin_x obj(x) \text{ s.t. } C(x)}
\]
where $x \in \mathbb{R}^d$ is a vector of decision variables, $obj: \mathbb{R}^d \rightarrow \mathbb{R}$ is a function mapping $x$ to a real \emph{objective value} which is to be minimized, and $C(x)$ is a set of constraints over $x$. 
We say $x^*$ is an \emph{optimal solution} and $obj(x^*)$ is the \emph{optimal value}.

In prior works, a \emph{parameterized optimization problem (Para-OP)} $P(\theta)$ extends an \emph{optimization problem} $P$ as: 
\begin{equation*}
    \textstyle{x^*(\theta) = \argmin_x  obj(x, \theta) \text{ s.t. } C(x)}
\end{equation*}
where $\theta \in \mathbb{R}^t $ is a vector of parameters.
The objective depends on $\theta$, and note that the constraints do not (in prior works).
When the parameters are known, a Para-OP is just an optimization problem.

In Predict+Optimize~\cite{elmachtoub2017smart,elmachtoub2022smart}, the true parameters $\theta \in \mathbb{R}^t$ for a Para-OP are unknown at solving time, and \emph{estimated parameters} $\hat{\theta}$ are used instead. 
Suppose that for each parameter, there are $m$ relevant features.
A learner is given $n$ observations forming a training data set $\{(A^1, \theta^1), \dots, (A^n, \theta^n) \}$, where $A^i \in \mathbb{R}^{t \times m}$ is a \emph{feature matrix} for $\theta^i$, and the task is to learn a \emph{prediction function} $f:\mathbb{R}^{t \times m} \rightarrow \mathbb{R}^t$ predicting parameters $\hat{\theta} = f(A)$ from any feature matrix $A$.

The key aspect of Predict+Optimize is to measure quality of the estimated parameters $\hat{\theta}$ using the \emph{regret function} as the loss function.
The regret is the objective difference between the \emph{true optimal solution} $x^*(\theta)$ and the \emph{estimated solution} $x^{*} (\hat{\theta})$ under the true parameters $\theta$.
Formally, the regret function $Regret(\hat{\theta}, \theta): \mathbb{R}^t \times \mathbb{R}^t \rightarrow \mathbb{R}_{\geq 0}$ is: 
\begin{equation*}
    Regret(\hat{\theta},\theta) =  obj (x^*(\hat{\theta}), \theta) - obj (x^*(\theta), \theta)
\end{equation*}
where $obj (x^*(\hat{\theta}), \theta)$ is the \emph{estimated optimal value} and $obj (x^*(\theta), \theta)$  is the \emph{true optimal value}.
Following the empirical risk minimization principle, prior learning methods~\cite{elmachtoub2020decision} aim to return the prediction function to be the function $f$ from the set of models $\mathcal{F}$ attaining the smallest average regret over the training data:
\begin{equation}
    \textstyle{f^* = \arg\min_{f \in \mathcal{F}}  \frac{1}{n}\sum^{n}_{i=1} 
    Regret(f(A^i), \theta^i)}  
    \label{eq:ERM}
\end{equation}


Mandi and Guns~\cite{mandi2020interior} proposed to use a (feedforward) neural network to predict the unknown parameters from features.
The standard approach to training neural networks is via gradient descent using the backpropagation algorithm, in order to learn the weight on each edge of the network.
Concretely, fixing a training feature matrix $A$ and a corresponding true parameter vector $\theta$, for each edge $e$ on the network with weight $w_{e}$, we need to compute the derivative $\Diff{Regret}{w_e}$.
Using the multivariate chain rule, the derivative can be decomposed as follows:
\begin{equation}
    \Diff{Regret(\hat{\theta},\theta)}{w_e} = \Del{Regret(\hat{\theta},\theta)}{x^*(\hat{\theta})} \Del{ x^*(\hat{\theta})}{ \hat{\theta}} \Del{\hat{\theta}}{w_e}
    \label{eq:oldChain}
\end{equation}
where $\Del{Regret(\hat{\theta},\theta)}{x^*(\hat{\theta})}$ is a vector with the same length as the decision variable vector $x^*$, $\Del{ x^*(\hat{\theta})}{ \hat{\theta}}$ is a matrix, and $\Del{\hat{\theta}}{w_e}$ is a vector with the same length as the number of unknown parameters.
The right hand side of the Equation~\ref{eq:oldChain} is to be interpreted as a matrix product.


On the right hand side, the first term is the gradient of the regret with respect to the estimated optimal solution.
In the context of linear programs, this is trivial to compute since the objective function is linear in $x^*$.
The third term, on the other hand, is the gradient of the estimated parameters with respect to the neural network edge weight, which can be computed efficiently using the standard backpropagation algorithm~\cite{rumelhart1986learning}.
What remains is the second term $\Del{x^*}{\thetahat}$: the derivative of each decision variable with respect to each predicted parameter.
In general, these derivatives do not exist for linear programs.
Mandi and Guns~\cite{mandi2020interior} thus proposed to use an interior-point LP solver: it generates a sequence of modified programs, with \emph{logarithmic barrier terms} of decreasing weights introduced into the objective.
Upon termination of the solver at an approximate optimum of the LP, the interior-point solver returns the approximate optimum as well as auxiliary information such as the weight of the barrier term at termination, all of which are used by Mandi and Guns to extract some gradient information related to the original problem.
Due to page limits, we do not present their approach in any detail here, but instead refer the reader directly to our new method (Section~\ref{sec:packing}), which is a substantial modification.

\section{Predict+Optimize for Unknown Constraint Parameters}
\label{sec:CR}

We now generalize the framework in the previous section to include unknown parameters also in constraints.

The notion of a Para-OP can be easily extended to allow unknown parameters in both the objective and constraints:
\begin{equation*}
    \textstyle{x^*(\theta) = \argmin_x obj(x, \theta) \text{ s.t. } C(x, \theta)}
\end{equation*}
Note that in this extension, both the objective and constraints depend on the unknown parameters $\theta$.

When constraints contain unknown parameters, the feasible region is only approximated at solving time, and the estimated solution may be infeasible under the true parameters.
Fortunately, in some applications, once the true parameters are revealed, there might be possible ways for us to correct an infeasible solution into a feasible one.
This can be formalized as a \textit{correction function}, which takes an estimated solution $x^*(\hat{\theta})$ and true parameters $\theta$ and returns a \emph{corrected solution} $x^*_{corr}(\hat{\theta},\theta)$ that is feasible under $\theta$.
The choice of correction function will be problem and application-specific; indeed, the space of correction functions depends on the situation.
The goal then is to choose a correction function that generally loses the least amount in the objective from the correction.

\begin{example}\label{example:knapsack}
Consider a simplified version of the product stocking problem.
There are 4 divisible products (e.g.~oil and rice).
Each product $i$ has a per-unit revenue $r_i$ and a per-unit weight $w_i$, and there is a maximum of $M_i$ units available for sourcing.
The goal is to make an order of $x_i$ units of item $i$, so as to maximize $\sum_{i=1}^4 r_i \cdot x_i$ subject to the constraint $\sum_{i=1}^4 w_i \cdot x_i \leq C$, where $r = [13,14,10,11]$ and $w = [5,3,4,9]$ are two arrays representing the per-unit revenues and weights of the products, as well as the constraint that $x_i \le M_i$ for all $i$.
However, the available capacity $C$ when the products arrive is unknown at solving time, depending on the volume of sales between the orders being made and the arrival of the products.
\end{example}

In Example~\ref{example:knapsack}, the products are selected based on an estimated warehouse capacity, but the prediction might be an overestimate.
One trivial correction function is to throw out the entire order, which is not useful.
A more useful correction function is to throw out some of each product to fit them into the actually available capacity.

While application scenarios may allow for post-hoc correction of an estimated solution, such correction may incur a penalty.
A \emph{penalty function} $Pen(x^*(\hat{\theta}) \to x^*_{corr}(\hat{\theta},\theta))$ takes an estimated solution $x^*(\hat{\theta})$ and the corrected solution $x^*_{corr}(\hat{\theta},\theta)$ and returns a non-negative penalty.
In Example~\ref{example:knapsack}, the correction incurs both 1) logistical costs for removing items and 2) costs of having paid for these products.

We are now ready to define the notion of \emph{post-hoc regret} $PReg(\hat{\theta},\theta)$ with respect to correction function $x^*_{corr}(\thetahat,\theta)$ and penalty function $Pen$:
\begin{equation}
    PReg(\hat{\theta},\theta) = obj(x^{*}_{corr}(\hat{\theta},\theta), \theta) - obj (x^*(\theta), \theta) + Pen(x^*(\hat{\theta}) \to x^*_{corr}(\hat{\theta},\theta))
    \label{eq:CReg_func}
\end{equation}

Given a correction function and a penalty, we will follow Mandi and Guns~\cite{mandi2020interior} and train a neural network to minimize the empirical post-hoc regret.
In the rest of the paper, we will study the application of this framework to packing and covering linear programs.
We will propose a generic correction function that should be applicable generally, and show how we can learn a neural network that performs well under the post-hoc regret.

\section{Predict+Optimize on Packing LPs}
\label{sec:packing}


In this section, we derive how we can train a neural network to predict unknown parameters in both the objective and constraints of a packing LP, under the new Predict+Optimize framework proposed in Section~\ref{sec:CR}.

Consider a packing LP in the standard form:
\begin{equation}
    x^* = \argmax_x c^\top  x \  \text{ s.t. } Gx \leq h, x \geq 0
\label{eq:packingLP}
\end{equation}
with decision variables $x \in \mathbb{R}^d$ and problem parameters $c \in \mathbb{R}^d$, $G \in \mathbb{R}^{p \times d}_{\ge 0}$, $h \in \mathbb{R}^p_{\ge 0}$.
Here, we consider the most general setting where all the problem parameters $c, G,$ and $h$ can be unknown.

We stated in Section~\ref{sec:CR} that the choice of a correction function generally depends on the specific problem and application.
On the other hand, packing LPs have a lot of structure we can exploit.
For example, the all 0s solution is always feasible.
We propose the following generic correction function, which is generally applicable for packing LPs: given an uncorrected solution $x^*$, find the largest $\lambda \in [0,1]$ such that $\lambda x^*$ satisfies the constraints under the true parameters.
This can be formalized as follows:
\begin{equation}
\begin{aligned}
&x^*_{corr}(\hat{\theta}, \theta = (c,G,h)) = \lambda x^*(\hat{\theta}) \\
\text{where }& \lambda = \max\{\lambda \in [0,1] \, | \, G(\lambda x^*(\hat{\theta})) \leq h\}\\ 
\end{aligned}
\label{eq:CF4packingLP}
\end{equation}

We also need to decide on a penalty function, which again is generally problem and application-specific.
For simplicity and for wide applicability, in the rest of the paper we will assume that the penalty function is \emph{linear}, in the sense that the penalty for the correction is the dot product between 1) the difference between the corrected and uncorrected solution vectors and 2) a vector of penalty factors.
Due to scaling reasons, we express this vector of penalty factors in units of the objective $c$, that is, the penalty vector is $\sigma \circ c$ where $\circ$ is the Hadamard/entrywise product, and $\sigma \ge 0$ is a non-negative tunable vector.
Then, the penalty function $Pen$ is formally defined as $Pen(x^*(\thetahat) \to x^*_{corr}(\thetahat,\theta)) = (\sigma \circ c)^\top (x^* - x^*_{corr})$.


With the above choices of correction and penalty, we can now write down the simplified form of post-hoc regret for packing LPs.
Note that, since packing LPs are maximization problems instead of minimization, the following has some sign differences from Equation~\ref{eq:CReg_func}.
\begin{equation}
    PReg(\hat{\theta},\theta) = c^\top(x^*(\theta) - x^{*}_{corr}(\hat{\theta},\theta)) + (\sigma \circ c)^\top  (x^*(\hat{\theta}) - x^{*}_{corr}(\hat{\theta},\theta))
\label{eq:CR4packingLP}
\end{equation}
where $\sigma \in \mathbb{R}_{\geq 0}^d$.


Following the approach of Mandi and Guns~\cite{mandi2020interior}, briefly described in Section~\ref{sec:background}, we use a neural network (of various architectures depending on the precise problem) to predict the parameters, before feeding the parameters into the interior-point LP solver of Mandi and Guns.
This interior point solver iteratively generates a sequence of relaxations to the LP, into problems of the form $$\textstyle{\argmax_x c^\top x + \mu[\sum_{i=1}^d \ln(x_i) + \sum_{i=1}^p\ln(h_i - G_i^\top x)]}$$ for a sequence of decreasing non-negative $\mu$.
Upon termination, we retrieve a solution $x$ which is approximately the optimum of the original LP, as well as the value of $\mu$ last used.

We derive how, using the solution $x$ and the barrier weight $\mu$, we can compute the relevant (approximations of) derivatives in order to train the neural network via gradient descent.
Using the law of total derivative, we get
\begin{equation}
    \Diff{PReg(\hat{\theta},\theta)}{w_e} = \left.\frac{\partial PReg(\hat{\theta},\theta)}{\partial x_{corr}^*}\right|_{x^*} \frac{\partial x_{corr}^*}{\partial x^*} \frac{\partial x^*(\hat{\theta})}{\partial \hat{\theta}} \frac{\partial \hat{\theta}}{\partial w_{e}} \nonumber
    + \left.\frac{\partial PReg(\hat{\theta},\theta)}{\partial x^*}\right|_{x^*_{corr}} \frac{\partial x^*(\hat{\theta})}{\partial \hat{\theta}} \frac{\partial \hat{\theta}}{\partial w_{e}}
\label{eq:chainRule}
\end{equation}
On the right hand side, the terms $\left.\frac{\partial PReg(\hat{\theta},\theta)}{\partial x_{corr}^*}\right|_{x^*}$ and $\left.\frac{\partial PReg(\hat{\theta},\theta)}{\partial x^*}\right|_{x^*_{corr}}$ are straightforward from (\ref{eq:CR4packingLP}): $\left.\frac{\partial PReg(\hat{\theta},\theta)}{\partial x_{corr}^*}\right|_{x^*} = -(1+\sigma)\circ \, c$ and $\left.\frac{\partial PReg(\hat{\theta},\theta)}{\partial x^*}\right|_{x^*_{corr}} = \sigma \circ \, c$.
The term $\Del{\thetahat}{w_e}$ relates only to the neural network and is handled directly by the standard backpropagation algorithm~\cite{rumelhart1986learning}.
Therefore, in the remainder of this section, we show how to compute (approximations of) $\Del{x_{corr}^*}{x^*}$ and $\Del{x^*(\thetahat)}{\thetahat}$.



\paragraph{Computing $\Del{x_{corr}^*}{x^*}$.}
The term $\Del{x_{corr}^*}{x^*}$ is determined solely by the correction function (\ref{eq:CF4packingLP}), and has nothing to do with the LP solver.
We use the law of total derivative again to decompose the term:
\begin{equation*}
    \frac{\partial x^{*}_{c}(\hat{\theta},\theta)}{\partial x^*(\hat{\theta})} = \left.\frac{\partial x^{*}_{c}(\hat{\theta},\theta)}{\partial \lambda}\right|_{x^*} \frac{\partial \lambda}{\partial x^*(\hat{\theta})} + \left.\frac{\partial x^{*}_{c}(\hat{\theta},\theta) }{\partial x^*(\hat{\theta})}\right|_\lambda
\end{equation*}
Observe that $\left.\frac{\partial x^{*}_{c}(\hat{\theta},\theta)}{\partial \lambda}\right|_{x^*}=x^*(\hat{\theta})$ and $\frac{\partial x^{*}_{c}(\hat{\theta},\theta)}{\partial x^*(\hat{\theta})}\Big|_\lambda = \lambda I$ ($I$ is an identity matrix).
It remains to derive $\frac{\partial \lambda}{\partial x^*(\hat{\theta})}$, captured in the following lemma.

\begin{lemma}
Let  $x^*(\hat{\theta})$ denote the estimated optimal solution of the packing LP shown in (\ref{eq:packingLP}), $x^*_{corr}(\thetahat,\theta) = \lambda x^*(\thetahat)$ be the correction function shown in (\ref{eq:CF4packingLP}).
Suppose that at the optimal $\lambda$ of (\ref{eq:CF4packingLP}), the $i^{th}$ inequality constraint $G_i$ is tight, namely $G_i^\top (\lambda x^*(\thetahat)) = h_i$.
Then, we have
\[
    \frac{\partial \lambda}{\partial x^*(\hat{\theta})}=- \frac{\lambda}{G_i^\top  x^*(\hat{\theta})} G_i^\top.
\]
As a corollary, we have
\begin{equation*}
    \Del{x^{*}_{corr}(\hat{\theta},\theta)}{x^*(\hat{\theta})} =  \frac{-\lambda}{G_i^\top  x^*(\hat{\theta})} x^*(\hat{\theta}) G_i^\top + \lambda I.
\end{equation*}
\end{lemma}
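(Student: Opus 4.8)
The plan is to obtain $\frac{\partial\lambda}{\partial x^*(\hat{\theta})}$ by implicit differentiation of the tightness condition, and then get the corollary by direct substitution. Throughout, recall that in this derivative the true parameters $\theta=(c,G,h)$ are held fixed (they are revealed, not predicted), so $G_i$ and $h_i$ are constants; the only quantity varying is the estimated solution vector $x^*(\hat{\theta})$, and $\lambda$ is viewed as the scalar-valued function of $x^*(\hat{\theta})$ implicitly determined by (\ref{eq:CF4packingLP}).

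By hypothesis the optimal $\lambda$ makes the $i$th constraint tight, i.e.\ $\lambda\,\bigl(G_i^\top x^*(\hat{\theta})\bigr) = h_i$ (using that $\lambda$ is a scalar). Differentiating both sides with respect to the vector $x^*(\hat{\theta})$ and applying the product rule, with $G_i^\top x^*(\hat{\theta})$ linear in $x^*(\hat{\theta})$ and gradient $G_i^\top$, gives
\[
 \frac{\partial\lambda}{\partial x^*(\hat{\theta})}\,\bigl(G_i^\top x^*(\hat{\theta})\bigr) \;+\; \lambda\, G_i^\top \;=\; 0,
\]
since the right-hand side $h_i$ is constant. Solving for the row vector $\frac{\partial\lambda}{\partial x^*(\hat{\theta})}$ by dividing through by the nonzero scalar $G_i^\top x^*(\hat{\theta})$ yields exactly $-\tfrac{\lambda}{G_i^\top x^*(\hat{\theta})}\,G_i^\top$, the claimed identity. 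For the corollary, substitute this into the total-derivative decomposition displayed immediately before the lemma, together with the already-observed facts $\left.\frac{\partial x^{*}_{corr}}{\partial\lambda}\right|_{x^*}=x^*(\hat{\theta})$ and $\left.\frac{\partial x^{*}_{corr}}{\partial x^*(\hat{\theta})}\right|_\lambda=\lambda I$; this produces the outer product $-\tfrac{\lambda}{G_i^\top x^*(\hat{\theta})}\,x^*(\hat{\theta})\,G_i^\top$ plus $\lambda I$, as stated.

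The step I expect to need the most care is justifying that $\lambda$ is actually differentiable at the point in question. Since $\lambda$ is a maximum over the box $[0,1]$ intersected with halfspaces, it is only piecewise smooth: concretely $\lambda=\min\{1,\ \min_j h_j/(G_j^\top x^*(\hat{\theta}))\}$ over the constraints $j$ with $G_j^\top x^*(\hat{\theta})>0$. The stated formula is valid precisely on the generic region where a single constraint $i$ strictly attains this minimum and $\lambda<1$, so that the binding index is locally constant and the implicit equation $\lambda\,(G_i^\top x^*(\hat{\theta}))=h_i$ has a smooth solution by the implicit function theorem; we also use $G_i^\top x^*(\hat{\theta})\neq 0$, which holds whenever the $i$th constraint is tight with $h_i>0$. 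At points where the active set changes (ties, or $\lambda$ hitting the boundary $1$) the expression should be read as a one-sided derivative or an element of the subdifferential, which still suffices for the gradient-descent training described in the next subsection. Aside from this regularity caveat, the argument is just the short product-rule manipulation above.
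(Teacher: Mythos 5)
Your proof is correct and follows essentially the same route as the paper's: implicit differentiation of the tightness condition $\lambda\,(G_i^\top x^*(\hat{\theta}))=h_i$ via the product rule, then substitution into the total-derivative decomposition. Your added discussion of when $\lambda$ is actually differentiable (locally constant binding constraint, $\lambda<1$, $G_i^\top x^*(\hat{\theta})\neq 0$) is a regularity point the paper glosses over, and is a welcome refinement rather than a divergence.
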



\begin{proof}

Since the $i^{th}$ inequality constraint $G_i$ is tight, we have:
\begin{equation}
\lambda \sum_{j=1}^n G_{ij}x^*(\hat{\theta})_j = h_i
\label{eq:coverLPCons}
\end{equation}

The implicit differentiation of Equation \ref{eq:coverLPCons} with respect to $x^*(\hat{\theta})$ is:
\begin{equation*}
    \frac{\partial}{\partial x^*(\hat{\theta})}(\lambda \sum_{j=1}^n G_{ij}x^*(\hat{\theta})_j) = \frac{\partial h_i}{\partial x^*(\hat{\theta})}
\end{equation*}
Since $x^*(\hat{\theta})$ is a vector, differentiation on the $l^{th}$ variable is:
\begin{eqnarray*}
\frac{\partial}{\partial x^*(\hat{\theta})_l}(\lambda \sum_{j=1}^n G_{ij}x^*(\hat{\theta})_j) = \frac{\partial h_i}{\partial x^*(\hat{\theta})_l}
\end{eqnarray*}
where
\begin{equation*}
    \frac{\partial}{\partial x^*(\hat{\theta})_l}(\lambda \sum_{j=1}^n G_{ij}x^*(\hat{\theta})_j)=\frac{\partial \lambda}{\partial x^*(\hat{\theta})_l}G_i^\top  x^*(\hat{\theta}) + \lambda G_{il}
\end{equation*}

Since $\frac{\partial h_i}{\partial x^*(\hat{\theta})_l}=0$, we can obtain:
\begin{equation*}
     \frac{\partial \lambda}{\partial x^*(\hat{\theta})}=- \frac{\lambda}{G_i^\top  x^*(\hat{\theta})} G_i^\top.
\end{equation*}

Since $\frac{\partial x^{*}_{corr}(\hat{\theta},\theta)}{\partial \lambda}=x^*(\hat{\theta})$, $\frac{\partial x^{*}_{corr}(\hat{\theta},\theta)}{\partial x^*(\hat{\theta})}\Big|_\lambda = \lambda I$, the gradient of the corrected optimal solution with respect to the predicted optimal solution is:
\begin{eqnarray*}
\frac{\partial x^{*}_{corr}(\hat{\theta},\theta)}{\partial x^*(\hat{\theta})} 
&=&\frac{\partial x^{*}_{corr}(\hat{\theta},\theta)}{\partial \lambda} \frac{\partial \lambda}{\partial x^*(\hat{\theta})} + \frac{\partial x^{*}_{corr}(\hat{\theta},\theta) }{\partial x^*(\hat{\theta})}\Big|_\lambda  \nonumber    \\
~&=& \frac{-\lambda}{G_i^\top  x^*(\hat{\theta})} x^*(\thetahat) G_i^\top + \lambda I .
\end{eqnarray*}

\end{proof}

\paragraph{Approximating $\Del{x^*(\thetahat)}{\thetahat}$.}
Recall that the interior point solver of Mandi and Guns solves a sequence of relaxations of the following form:
\begin{equation}
x^* = \underset{x}{\arg\max}\ c^\top  x + \mu \left[\sum_{i=1}^d \ln(x_i) +  \sum_{i=1}^p \ln(h_i - G_i^\top x)\right]
\label{eq:logbarrier_relaxation}
\end{equation}
The term $\mu [\sum_{i=1}^d \ln(x_i) +  \sum_{i=1}^p \ln(h_i - G_i^\top x)]$ is also known as a logarithmic barrier term, which is commonly used in interior-point based solving methods~\cite{boyd2004convex}.
At termination, we get the values of $x^*$ and $\mu$.
We will use these values, as well as Equation (\ref{eq:logbarrier_relaxation}), to approximate the gradient information $\Del{x^*(\thetahat)}{\thetahat}$.

In the context of the packing LP, the unknown parameter $\thetahat$ may either be $c$, $G$ or $h$.
The case of $c$ has already been derived by Mandi and Guns~\cite{mandi2020interior} (see Appendix A.1 and A.2 in their paper).
The following two lemmas captures the other two cases.




Define the notation $f(x,c,G,h) = c^\top x + \mu (\sum_{i=1}^d \ln(x_i)) + \mu (\sum_{i=1}^p \ln(h_i - G_ix))$.
Then, Problem (\ref{eq:logbarrier_relaxation}) can be expressed as finding $x^* = \argmax_x f(x,c,G,h)$.
Using this notation, we write down the following two lemmas on computing $\Del{x^*}{h}$ and $\Del{x^*}{G}$ approximately.

\begin{lemma}

Consider the LP relaxation (\ref{eq:logbarrier_relaxation}), defining $x^*$ as a function of $c, G$ and $h$.
Then, under this definition of $x^*$,
\begin{equation*}
    \Del{ x^*}{ h} = - f_{xx}(x^*)^{-1} f_{hx}(x^*)
\end{equation*}
where $f_{xx}$ denotes the matrix of second derivatives of $f$ with respect to different coordinates of $x$, and similarly for other subscripts, and explicitly:
\begin{equation*}
f_{x_{k} x_{j}}(x)=\begin{cases}
-\mu x_{j}^{-2} - \mu \sum_{i=1}^{p}G_{ij}^2/(h_i-G_i^\top x)^2 &  j=k \\
- \mu \sum_{i=1}^{p}G_{ij}G_{ik}/(h_i-G_i^\top x)^2 & j \neq k
\end{cases}
\end{equation*}
and
\begin{equation*}
    f_{h_{\ell} x_{j}}(x) = \mu G_{\ell j}/(h_\ell-G_\ell^\top x)^2
\end{equation*}
\label{lemma:PLP_h}
\end{lemma}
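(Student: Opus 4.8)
The plan is to obtain the formula by implicit differentiation of the first-order optimality condition, and then to read off the two families of second derivatives by direct computation. Since for fixed $\mu > 0$ the objective $f(\cdot, c, G, h)$ of the relaxation (\ref{eq:logbarrier_relaxation}) is strictly concave on its open domain $\{x : x_i > 0 \text{ and } G_i^\top x < h_i\}$ — the linear term $c^\top x$ has zero Hessian, and each barrier term $\mu\ln(x_i)$ and $\mu\ln(h_i - G_i^\top x)$ is strictly concave — the maximizer $x^*$ is unique and is characterized by the stationarity condition $\nabla_x f(x^*, c, G, h) = 0$. Treating this as an identity in $h$ (holding $c$ and $G$ fixed), I would differentiate both sides with respect to $h$.

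First I would record the gradient coordinates $\partial f/\partial x_j = c_j + \mu/x_j - \mu\sum_{i=1}^p G_{ij}/(h_i - G_i^\top x)$. Differentiating the vector identity $\nabla_x f = 0$ with respect to $h$ and applying the chain rule gives $f_{xx}(x^*)\,\Del{x^*}{h} + f_{hx}(x^*) = 0$, where $f_{xx}$ is the Hessian of $f$ in $x$ and $f_{hx}$ collects the mixed second derivatives $\partial^2 f / \partial h_\ell \partial x_j$. As $f_{xx}(x^*)$ is negative definite by strict concavity, it is invertible, and solving yields $\Del{x^*}{h} = -f_{xx}(x^*)^{-1} f_{hx}(x^*)$, the claimed expression. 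It then remains to compute the entries: differentiating $\partial f/\partial x_j$ once more in $x_k$ produces the stated $f_{x_k x_j}$ — note the extra $-\mu x_j^{-2}$ on the diagonal coming from the $\mu\ln(x_j)$ barrier, while the remaining part comes from the constraint barriers via $\partial/\partial x_k\big(1/(h_i - G_i^\top x)\big) = G_{ik}/(h_i - G_i^\top x)^2$; and differentiating $\partial f/\partial x_j$ in $h_\ell$ gives $f_{h_\ell x_j} = \mu G_{\ell j}/(h_\ell - G_\ell^\top x)^2$, using $\partial(h_i - G_i^\top x)/\partial h_\ell = \delta_{i\ell}$.

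The step that deserves the most care — and the only real obstacle — is the legitimacy of the chain rule, i.e.\ that $x^*$ is genuinely a differentiable function of $h$ near the point in question. This follows from the implicit function theorem applied to $\nabla_x f = 0$: $f$ is $C^\infty$ on its domain, and the Jacobian of $\nabla_x f$ in the $x$ variables is precisely $f_{xx}(x^*)$, which we have just argued is invertible. I would also add the caveat, consistent with the surrounding interior-point approach of Mandi and Guns, that this is an \emph{approximation} to the sensitivity of the true LP optimum, since we differentiate the barrier relaxation at the terminal $\mu > 0$ rather than the LP (\ref{eq:packingLP}) itself; the explicit formulas above are exactly what is evaluated numerically using the returned $x^*$ and $\mu$.
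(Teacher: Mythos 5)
Your proposal is correct and follows essentially the same route as the paper's own proof: implicitly differentiating the stationarity condition $f_x(x^*)=0$ with respect to $h$, obtaining $f_{hx}(x^*)+f_{xx}(x^*)\Del{x^*}{h}=0$, and solving, with the explicit second-derivative entries computed directly. Your additional justifications (strict concavity of the barrier objective, hence invertibility of $f_{xx}$ and applicability of the implicit function theorem) are welcome details that the paper leaves implicit.
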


\begin{proof}
Since $x^* = \argmax_x f(x,c,G,h)$ is an optimum, $f_x(x^*) = \left.\frac{\partial f(x)}{\partial x}\right|_{x = x^*} = 0$.
Thus, 
\begin{equation*}
    \frac{\partial}{\partial h}f_x(x^*) = 0
\end{equation*}
By the chain rule,
\begin{equation*}
\frac{\partial}{\partial h}f_x(x^*)=f_{hx}(x^*)+ f_{xx}(x^*)\frac{\partial x^*}{\partial h}
\end{equation*}
Rearranging the aboved equation, we can obtain:
\begin{equation*}
    \frac{\partial x^*}{\partial h} = - f_{xx}(x^*)^{-1} f_{hx}(x^*)
\end{equation*}
where 
\begin{equation*}
f_{x_{k} x_{j}}(x)=\left\{\begin{array}{c}
-\mu x_{j}^{-2} - \mu \sum_{i=1}^{p}G_{ij}^2/(h_i-G_i^\top x)^2, \quad j=k \\
- \mu \sum_{i=1}^{p}G_{ij}G_{ik}/(h_i-G_i^\top x)^2, \quad j \neq k
\end{array}\right.
\end{equation*}
and
\begin{equation*}
    f_{h_{\ell} x_{j}}(x) = \mu G_{\ell j}/(h_\ell-G_\ell^\top x)^2
\end{equation*}
\end{proof}

\begin{lemma}

Consider the LP relaxation (\ref{eq:logbarrier_relaxation}), defining $x^*$ as a function of $c, G$ and $h$.
Then, under this definition of $x^*$,
\begin{equation*}
    \Del{ x^*}{ G} = - f_{xx}(x^*)^{-1} f_{Gx}(x^*)
\end{equation*}
where $f_{xx}$ denotes the matrix of second derivatives of $f$ with respect to different coordinates of $x$, and similarly for other subscripts, and explicitly:
\begin{equation*}
f_{x_{k} x_{j}}(x)=\begin{cases}
-\mu x_{j}^{-2} - \mu \sum_{i=1}^{p}G_{ij}^2/(h_i-G_i^\top x)^2 & j=k \\
- \mu \sum_{i=1}^{p}G_{ij}G_{ik}/(h_i-G_i^\top x)^2 & j \neq k
\end{cases}
\end{equation*}
and
\begin{equation*}
f_{G_{\ell q} x_{j}}(x)=\begin{cases}
-\mu G_{\ell j} x_{q}/(h_\ell-G_\ell^\top x)^2-\mu/(h_\ell-G_\ell^\top x) & q=j \\
-\mu G_{\ell j} x_{q}/(h_\ell-G_\ell^\top x)^2 & q \neq j.
\end{cases}
\end{equation*}
\label{lemma:PLP_G}
\end{lemma}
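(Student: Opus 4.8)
The plan is to mirror the proof of Lemma~\ref{lemma:PLP_h} almost verbatim, replacing the scalar parameter $h_\ell$ by a generic entry $G_{\ell q}$ of the constraint matrix and then re-deriving the relevant mixed second derivatives.

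First, I would invoke the first-order optimality condition for the barrier problem~(\ref{eq:logbarrier_relaxation}): since the objective $f(\cdot,c,G,h)$ is strictly concave on its (open) domain and $x^*$ is its maximizer, $x^*$ is interior and satisfies $f_x(x^*) = 0$. Treating $x^*$ as an implicit function of $G$ through this stationarity equation and differentiating both sides with respect to $G$, the chain rule gives $f_{Gx}(x^*) + f_{xx}(x^*)\,\frac{\partial x^*}{\partial G} = 0$. Because $f_{xx}(x^*)$ is the Hessian of a strictly concave barrier, it is negative definite and hence invertible, so we may solve for $\frac{\partial x^*}{\partial G} = -f_{xx}(x^*)^{-1} f_{Gx}(x^*)$. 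This is exactly the manipulation used in Lemma~\ref{lemma:PLP_h}; the only new content is the closed form of the two second-derivative tensors.

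Second, I would compute the second derivatives coordinatewise, starting from $f_{x_j}(x) = c_j + \mu x_j^{-1} - \mu\sum_{i=1}^p G_{ij}/(h_i - G_i^\top x)$. Differentiating once more with respect to $x_k$ recovers the $f_{x_k x_j}$ block exactly as stated, which coincides with the one already derived in Lemma~\ref{lemma:PLP_h}, so I would just cite it. For $f_{G_{\ell q} x_j}$, I would observe that only the $i=\ell$ summand of $f_{x_j}$ depends on $G_{\ell q}$, and that $G_{\ell q}$ enters that summand in two places: as the explicit numerator factor $G_{\ell j}$, whose derivative is nonzero only when $q=j$, and inside the denominator through $G_\ell^\top x = \sum_{q'} G_{\ell q'} x_{q'}$, whose derivative always contributes a factor $x_q$. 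The product/quotient rule then yields $f_{G_{\ell q} x_j}(x) = -\mu\bigl(\1[q=j]/(h_\ell - G_\ell^\top x) + G_{\ell j} x_q/(h_\ell - G_\ell^\top x)^2\bigr)$, which is precisely the two-case formula in the statement.

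The step I expect to require the most care is the index bookkeeping between the cases $q = j$ and $q \neq j$: the ``extra'' additive term $-\mu/(h_\ell - G_\ell^\top x)$ comes solely from differentiating the numerator $G_{\ell j}$ and so is present only on the diagonal $q=j$, whereas the denominator-derived term $-\mu G_{\ell j} x_q/(h_\ell - G_\ell^\top x)^2$ appears for every pair $(\ell,q)$; conflating these is the obvious pitfall. I would also note, as in the companion lemma, that all derivatives are evaluated at the iterate $x^*$ returned by the interior-point solver together with its terminating barrier weight $\mu$, so that $\frac{\partial x^*}{\partial G}$ is an approximation to the true LP sensitivity in the same sense as the $c$ and $h$ cases.
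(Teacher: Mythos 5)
Your proposal is correct and follows essentially the same route as the paper: implicit differentiation of the stationarity condition $f_x(x^*)=0$ with respect to $G$, followed by solving $f_{Gx}(x^*) + f_{xx}(x^*)\,\Del{x^*}{G} = 0$ for $\Del{x^*}{G}$, with the coordinatewise second derivatives computed exactly as stated. You in fact supply slightly more detail than the paper does, namely the explicit product-rule derivation of $f_{G_{\ell q} x_j}$ and the observation that $f_{xx}(x^*)$ is invertible because the barrier objective is strictly concave.
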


\begin{proof}
Since $x^* = \argmax_x f(x,c,G,h)$ is an optimum, $f_x(x^*) = \left.\frac{\partial f(x)}{\partial x}\right|_{x = x^*} = 0$.
Thus, 
\begin{equation*}
    \frac{\partial}{\partial G}f_x(x^*) = 0
\end{equation*}
By the chain rule,
\begin{equation*}
\frac{\partial}{\partial G}f_x(x^*)=f_{Gx}(x^*)+ f_{xx}(x^*)\frac{\partial x^*}{\partial G}
\end{equation*}
Rearranging the aboved equation, we can obtain:
\begin{equation*}
    \frac{\partial x^*}{\partial G} = - f_{xx}(x^*)^{-1} f_{Gx}(x^*)
\end{equation*}
where 
\begin{equation*}
f_{x_{k} x_{j}}(x)=\left\{\begin{array}{c}
-\mu x_{j}^{-2} - \mu \sum_{i=1}^{p}G_{ij}^2/(h_i-G_i^\top x)^2, \quad j=k \\
- \mu \sum_{i=1}^{p}G_{ij}G_{ik}/(h_i-G_i^\top x)^2, \quad j \neq k
\end{array}\right.
\end{equation*}
and
\begin{equation*}
f_{G_{\ell q} x_{j}}(x)=\left\{\begin{array}{c}
-\mu G_{\ell j} x_{q}/(h_\ell-G_\ell^\top x)^2-\mu/(h_\ell-G_\ell^\top x), \quad q=j \\
-\mu G_{\ell j} x_{q}/(h_\ell-G_\ell^\top x)^2, \quad q \neq j.
\end{array}\right.
\end{equation*}
\end{proof}


We end this section with a remark that the LP solver of Mandi and Guns~\cite{mandi2020interior} in fact returns more information than just $x$ and $\mu$.
In their work, they start not with (\ref{eq:logbarrier_relaxation}) but with the homogeneous self-dual (HSD) formulation of the original LP, involving the extra information returned by the solver, and perform derivative calculations similar in spirit to our lemmas in this section.
However, in our context of unknown $G$ and $h$, if we also tried using the HSD formulation for gradient calculations, we would end up with derivatives that are degenerate.
For this reason, we have opted to use the simpler Equation (\ref{eq:logbarrier_relaxation}) which, as we demonstrate in the experiments in Section~\ref{sec:experiments}, appears to work well in practice.

\section{Predict+Optimize on Covering LPs}
\label{sec:covering}


Covering LPs are closely related to packing LPs---in fact, they are the duals of each other.
Consider a covering LP in standard form:
\begin{equation}
    x^* = \underset{x}{\arg\min}\ c^\top  x \  \text{ s.t. } Gx \geq h, x \geq 0
\label{eq:coveringLP}
\end{equation}
with decision variables $x \in \mathbb{R}^d$ and problem parameters $c \in \mathbb{R}^d$, $G \in \mathbb{R}^{p \times d}$, $h \in \mathbb{R}^p$. 
We are again in the general setting where all the problem parameters $c, G,$ and $h$ can be unknown.

Performing Predict+Optimize on covering LPs is essentially the same as in the previous section, up to some sign changes to account for changed inequality directions and minimization vs maximization.
The only non-trivial difference is the need to change the correction function.
Instead of scaling down an uncorrected solution for feasibility, we will scale up in covering LPs, defined formally as follows:
\begin{equation}
\begin{aligned}
&x^*_{corr}(\hat{\theta}, \theta = (c,G,h)) = \lambda x^*(\hat{\theta}) \\
\text{where } & \lambda = \min\{\lambda \ge 1 \, | \, G(\lambda x^*(\hat{\theta})) \ge h\}\\ 
\end{aligned}
\label{eq:CF4coveringLP}
\end{equation}
We use the same penalty function as in the packing LP case.
The differentiation calculations from the last section apply essentially verbatim to covering LPs apart from minor sign differences.

\begin{lemma}
Let  $x^*(\hat{\theta})$ denote the estimated optimal solution of the covering LP shown in (9), $x^*_{corr}(\thetahat,\theta) = \lambda x^*(\thetahat)$ be the correction function shown in (10).
Suppose that at the optimal $\lambda$ of (10), the $i^{th}$ inequality constraint $G_i$ is tight, namely $G_i^\top (\lambda x^*(\thetahat)) = h_i$.
Then, we have
\[
    \frac{\partial \lambda}{\partial x^*(\hat{\theta})}=- \frac{\lambda}{G_i^\top  x^*(\hat{\theta})} G_i^\top.
\]
As a corollary, we have
\begin{equation*}
    \Del{x^{*}_{corr}(\hat{\theta},\theta)}{x^*(\hat{\theta})} =  \frac{-\lambda}{G_i^\top  x^*(\hat{\theta})} x^*(\hat{\theta}) G_i^\top + \lambda I.
\end{equation*}
\end{lemma}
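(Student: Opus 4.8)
The plan is to mirror, essentially verbatim, the argument already used for the packing LP lemma, since the correction function has the identical functional form $x^*_{corr} = \lambda x^*(\thetahat)$ and only the admissible range of $\lambda$ (namely $\lambda \ge 1$ rather than $\lambda \in [0,1]$) and the direction of the binding inequality have changed; neither of these features enters an implicit-differentiation calculation. So I would begin from the tightness hypothesis, written coordinate-wise as $\lambda \sum_{j=1}^{d} G_{ij}\, x^*(\thetahat)_j = h_i$, and view this as an identity that implicitly defines $\lambda$ as a (locally smooth) function of the vector $x^*(\thetahat)$, with $G$, $h$ held fixed.

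Next I would differentiate this identity with respect to the $\ell^{th}$ coordinate $x^*(\thetahat)_\ell$. The left-hand side is a product $\lambda \cdot (G_i^\top x^*(\thetahat))$, so the product rule gives
\[
    \frac{\partial \lambda}{\partial x^*(\thetahat)_\ell}\, \big(G_i^\top x^*(\thetahat)\big) + \lambda\, G_{i\ell} = \frac{\partial h_i}{\partial x^*(\thetahat)_\ell} = 0 ,
\]
using that $h_i$ does not depend on $x^*(\thetahat)$. Solving for the scalar partial derivative yields $\partial \lambda / \partial x^*(\thetahat)_\ell = -\lambda G_{i\ell} / (G_i^\top x^*(\thetahat))$, and collecting the $\ell$ coordinates into a row vector gives exactly $\partial \lambda / \partial x^*(\thetahat) = -\tfrac{\lambda}{G_i^\top x^*(\thetahat)} G_i^\top$, as claimed.

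For the corollary I would invoke the same total-derivative decomposition used for packing LPs, namely $\partial x^*_{corr}/\partial x^*(\thetahat) = \left(\partial x^*_{corr}/\partial \lambda\right)\left(\partial \lambda / \partial x^*(\thetahat)\right) + \left.\partial x^*_{corr}/\partial x^*(\thetahat)\right|_\lambda$, and substitute the elementary facts $\partial x^*_{corr}/\partial \lambda = x^*(\thetahat)$ and $\left.\partial x^*_{corr}/\partial x^*(\thetahat)\right|_\lambda = \lambda I$ together with the formula just derived, producing $\Del{x^{*}_{corr}(\thetahat,\theta)}{x^*(\thetahat)} = \tfrac{-\lambda}{G_i^\top x^*(\thetahat)} x^*(\thetahat) G_i^\top + \lambda I$.

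There is no substantive obstacle here; the only point worth a remark is the well-definedness of the expression, i.e.\ that $G_i^\top x^*(\thetahat) \neq 0$ so the division is legitimate and $\lambda$ indeed depends differentiably on $x^*(\thetahat)$ near the point in question. This holds because the binding constraint satisfies $\lambda\, G_i^\top x^*(\thetahat) = h_i$ with $\lambda \ge 1$, so $G_i^\top x^*(\thetahat) = h_i/\lambda$ is nonzero whenever $h_i \neq 0$ (the relevant case for a genuinely active covering constraint); I would state this as a standing assumption exactly as in the packing LP case. Everything else is a sign- and range-insensitive repetition of the earlier computation.
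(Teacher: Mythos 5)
Your proof is correct and follows essentially the same route as the paper's: implicit differentiation of the tightness identity $\lambda\, G_i^\top x^*(\hat{\theta}) = h_i$ via the product rule, followed by the same total-derivative decomposition for the corollary. Your added remark on well-definedness ($G_i^\top x^*(\hat{\theta}) \neq 0$) is a small bonus the paper leaves implicit.
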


\begin{proof}

Since the $i^{th}$ inequality constraint $G_i$ is tight, we have:
\begin{equation}
\lambda \sum_{j=1}^n G_{ij}x^*(\hat{\theta})_j = h_i
\label{eq:coverLPCons}
\end{equation}

The implicit differentiation of Equation \ref{eq:coverLPCons} with respect to $x^*(\hat{\theta})$ is:
\begin{equation*}
    \frac{\partial}{\partial x^*(\hat{\theta})}(\lambda \sum_{j=1}^n G_{ij}x^*(\hat{\theta})_j) = \frac{\partial h_i}{\partial x^*(\hat{\theta})}
\end{equation*}
Since $x^*(\hat{\theta})$ is a vector, differentiation on the $l^{th}$ variable is:
\begin{eqnarray*}
\frac{\partial}{\partial x^*(\hat{\theta})_l}(\lambda \sum_{j=1}^n G_{ij}x^*(\hat{\theta})_j) = \frac{\partial h_i}{\partial x^*(\hat{\theta})_l}
\end{eqnarray*}
where
\begin{equation*}
    \frac{\partial}{\partial x^*(\hat{\theta})_l}(\lambda \sum_{j=1}^n G_{ij}x^*(\hat{\theta})_j)=\frac{\partial \lambda}{\partial x^*(\hat{\theta})_l}G_i^\top  x^*(\hat{\theta}) + \lambda G_{il}
\end{equation*}

Since $\frac{\partial h_i}{\partial x^*(\hat{\theta})_l}=0$, we can obtain:
\begin{equation*}
     \frac{\partial \lambda}{\partial x^*(\hat{\theta})}=- \frac{\lambda}{G_i^\top  x^*(\hat{\theta})} G_i^\top.
\end{equation*}

Since $\frac{\partial x^{*}_{corr}(\hat{\theta},\theta)}{\partial \lambda}=x^*(\hat{\theta})$, $\frac{\partial x^{*}_{corr}(\hat{\theta},\theta)}{\partial x^*(\hat{\theta})}\Big|_\lambda = \lambda I$, the gradient of the corrected optimal solution with respect to the predicted optimal solution is:
\begin{eqnarray*}
\frac{\partial x^{*}_{corr}(\hat{\theta},\theta)}{\partial x^*(\hat{\theta})} 
&=&\frac{\partial x^{*}_{corr}(\hat{\theta},\theta)}{\partial \lambda} \frac{\partial \lambda}{\partial x^*(\hat{\theta})} + \frac{\partial x^{*}_{corr}(\hat{\theta},\theta) }{\partial x^*(\hat{\theta})}\Big|_\lambda  \nonumber    \\
~&=& -\frac{\lambda}{G_i^\top  x^*(\hat{\theta})} x^*(\thetahat) G_i^\top + \lambda I .
\end{eqnarray*}

\end{proof}

\begin{lemma}
In the context of covering LP, consider the LP relaxation in the following form:
\begin{equation}
x^* = \underset{x}{\arg\min}\ c^\top  x - \mu \left[\sum_{i=1}^d \ln(x_i) -  \sum_{i=1}^p \ln(G_i^\top x - h_i)\right]
\label{eq:logbarrier_relaxation}
\end{equation}
Defining $x^*$ as a function of $c, G$ and $h$.
Then, under this definition of $x^*$,
\begin{equation*}
    \Del{ x^*}{ h} = - f_{xx}(x^*)^{-1} f_{hx}(x^*)
\end{equation*}
where $f_{xx}$ denotes the matrix of second derivatives of $f$ with respect to different coordinates of $x$, and similarly for other subscripts, and explicitly:
\begin{equation*}
f_{x_{k} x_{j}}(x)=\left\{\begin{array}{c}
\mu x_{j}^{-2} + \mu \sum_{i=1}^{p}G_{ij}^2/(h_i-G_i^\top x)^2, \quad j=k \\
 \mu \sum_{i=1}^{p}G_{ij}G_{ik}/(h_i-G_i^\top x)^2, \quad j \neq k
\end{array}\right.
\end{equation*}
and
\begin{equation*}
    f_{h_{\ell} x_{j}}(x) = - \mu G_{\ell j}/(h_\ell-G_\ell^\top x)^2
\end{equation*}
\label{lemma:PLP_h}
\end{lemma}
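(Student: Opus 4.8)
The plan is to reuse the implicit-differentiation argument from the packing-LP counterpart proven above almost verbatim, adjusting only for the switch from maximization to minimization and for the reversed constraint $G_i^\top x \ge h_i$. Write $f(x,c,G,h) = c^\top x - \mu\sum_{j=1}^d \ln x_j - \mu\sum_{i=1}^p \ln(G_i^\top x - h_i)$ for the log-barrier objective of the relaxation~(\ref{eq:logbarrier_relaxation}) in the statement, so that $x^* = \argmin_x f(x,c,G,h)$ over the open set $U = \{x : x_j > 0 \text{ for all } j,\ G_i^\top x > h_i \text{ for all } i\}$. Throughout I treat $c$ and $G$ as fixed and regard $x^*$ as a function of $h$.

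The only step that requires more than bookkeeping is the following. On $U$, the term $-\mu\sum_j \ln x_j$ contributes $\mu\,\mathrm{diag}(x_j^{-2})$ — which is positive definite — to the Hessian of $f$, while each summand $-\mu\ln(G_i^\top x - h_i)$ is convex in $x$; hence $f$ is strictly convex on $U$ and $f_{xx}(x)$ is positive definite, in particular invertible, for every $x \in U$. Since $x^* \in U$ is an interior minimizer, it satisfies the first-order stationarity condition $f_x(x^*) = 0$. Invertibility of $f_{xx}(x^*)$ also lets us invoke the implicit function theorem on $F(x,h) := f_x(x,c,G,h)$ to conclude that $x^*$ is indeed a differentiable function of $h$ near the point in question, so the derivative $\Del{x^*}{h}$ is well defined.

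Differentiating the identity $F(x^*(h),h) \equiv 0$ with respect to $h$ and applying the chain rule gives $f_{hx}(x^*) + f_{xx}(x^*)\,\Del{x^*}{h} = 0$, and multiplying by $f_{xx}(x^*)^{-1}$ yields $\Del{x^*}{h} = -f_{xx}(x^*)^{-1} f_{hx}(x^*)$, exactly the claimed formula. The explicit entries then follow from direct differentiation of $f_{x_j}(x) = c_j - \mu x_j^{-1} - \mu\sum_{i=1}^p G_{ij}/(G_i^\top x - h_i)$: differentiating once more in $x_k$ produces the stated diagonal ($j=k$) and off-diagonal ($j\ne k$) expressions for $f_{x_k x_j}$, where one uses $(G_i^\top x - h_i)^2 = (h_i - G_i^\top x)^2$ to match the displayed denominators, and differentiating in $h_\ell$ produces $f_{h_\ell x_j}(x) = -\mu G_{\ell j}/(h_\ell - G_\ell^\top x)^2$; substituting these into $-f_{xx}(x^*)^{-1} f_{hx}(x^*)$ completes the proof. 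I anticipate no genuine obstacle here: the computation is routine, and the only thing to watch is consistent sign tracking (minimization, reversed inequalities, and the $-1$ coming from the chain rule), with the strict-convexity/invertibility remark being the single conceptual ingredient — the same one tacitly underlying the packing-LP lemma.
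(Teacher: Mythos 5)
Your proposal is correct and follows essentially the same route as the paper: differentiate the first-order stationarity condition $f_x(x^*)=0$ with respect to $h$, apply the chain rule to get $f_{hx}(x^*)+f_{xx}(x^*)\,\Del{x^*}{h}=0$, and solve, with the explicit entries obtained by direct differentiation of the log-barrier objective. Your additional remarks on strict convexity, invertibility of $f_{xx}$, and the implicit function theorem supply justification that the paper leaves tacit, but they do not change the argument.
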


\begin{proof}
Since $x^* = \argmin_x f(x,c,G,h)$ is an optimum, $f_x(x^*) = \left.\frac{\partial f(x)}{\partial x}\right|_{x = x^*} = 0$.
Thus, 
\begin{equation*}
    \frac{\partial}{\partial h}f_x(x^*) = 0
\end{equation*}
By the chain rule,
\begin{equation*}
\frac{\partial}{\partial h}f_x(x^*)=f_{hx}(x^*)+ f_{xx}(x^*)\frac{\partial x^*}{\partial h}
\end{equation*}
Rearranging the aboved equation, we can obtain:
\begin{equation*}
    \frac{\partial x^*}{\partial h} = - f_{xx}(x^*)^{-1} f_{hx}(x^*)
\end{equation*}
where 
\begin{equation*}
f_{x_{k} x_{j}}(x)=\left\{\begin{array}{c}
\mu x_{j}^{-2} + \mu \sum_{i=1}^{p}G_{ij}^2/(h_i-G_i^\top x)^2, \quad j=k \\
 \mu \sum_{i=1}^{p}G_{ij}G_{ik}/(h_i-G_i^\top x)^2, \quad j \neq k
\end{array}\right.
\end{equation*}
and
\begin{equation*}
    f_{h_{\ell} x_{j}}(x) = - \mu G_{\ell j}/(h_\ell-G_\ell^\top x)^2
\end{equation*}
\end{proof}


\begin{lemma}

In the context of covering LP, consider the LP relaxation in the following form:
\begin{equation}
x^* = \underset{x}{\arg\min}\ c^\top  x - \mu \left[\sum_{i=1}^d \ln(x_i) -  \sum_{i=1}^p \ln(G_i^\top x - h_i)\right]
\label{eq:logbarrier_relaxation}
\end{equation}
Defining $x^*$ as a function of $c, G$ and $h$.
Then, under this definition of $x^*$,
\begin{equation*}
    \Del{ x^*}{ G} = - f_{xx}(x^*)^{-1} f_{Gx}(x^*)
\end{equation*}
where $f_{xx}$ denotes the matrix of second derivatives of $f$ with respect to different coordinates of $x$, and similarly for other subscripts, and explicitly:
\begin{equation*}
f_{x_{k} x_{j}}(x)=\left\{\begin{array}{c}
\mu x_{j}^{-2} + \mu \sum_{i=1}^{p}G_{ij}^2/(h_i-G_i^\top x)^2, \quad j=k \\
\mu \sum_{i=1}^{p}G_{ij}G_{ik}/(h_i-G_i^\top x)^2, \quad j \neq k
\end{array}\right.
\end{equation*}
and
\begin{equation*}
f_{G_{\ell q} x_{j}}(x)=\left\{\begin{array}{c}
\mu G_{\ell j} x_{q}/(h_\ell-G_\ell^\top x)^2+\mu/(h_\ell-G_\ell^\top x), \quad q=j \\
\mu G_{\ell j} x_{q}/(h_\ell-G_\ell^\top x)^2, \quad q \neq j.
\end{array}\right.
\end{equation*}
\label{lemma:PLP_G}
\end{lemma}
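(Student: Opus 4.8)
The plan is to mirror, almost verbatim, the argument used for the corresponding lemma in Section~\ref{sec:packing}, making only the sign adjustments forced by minimization (rather than maximization) and by the reversed constraint $Gx \ge h$. Write $f(x,c,G,h)$ for the log-barrier objective of the relaxation (\ref{eq:logbarrier_relaxation}), so that $x^* = x^*(c,G,h)$ is an \emph{unconstrained} interior maximizer/minimizer of $f$ in $x$. The starting point is the first-order optimality condition, which is the identity $f_x(x^*) = 0$. Regarding this as an identity in the matrix $G$ (with $c,h$ held fixed) and differentiating both sides with respect to $G$, while treating $x^*$ as an implicit function of $G$, the multivariate chain rule gives
\begin{equation*}
    0 = \frac{\partial}{\partial G} f_x(x^*) = f_{Gx}(x^*) + f_{xx}(x^*)\,\frac{\partial x^*}{\partial G}.
\end{equation*}
Because the logarithmic barrier terms make $f$ strictly concave (resp.\ convex) on the interior of the feasible region, $f_{xx}(x^*)$ is nonsingular there, so rearranging yields $\Del{x^*}{G} = -f_{xx}(x^*)^{-1} f_{Gx}(x^*)$, which is the asserted formula.

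It then remains to compute the two matrices entrywise by direct differentiation. Differentiating $f$ once in $x_j$ produces a term from the nonnegativity barrier ($\mu x_j^{-1}$ up to sign) and a term $\mu\sum_{i=1}^p G_{ij}/(G_i^\top x - h_i)$ from the constraint barrier; differentiating again in $x_k$ gives the displayed Hessian entries, with the $j=k$ case retaining the extra $\mu x_j^{-2}$ contribution and the $j \ne k$ case not, and where one uses $(G_i^\top x - h_i)^2 = (h_i - G_i^\top x)^2$ to bring the expression to the stated form. For the mixed partials $f_{G_{\ell q} x_j}$, only the $i=\ell$ summand of $\mu\sum_i G_{ij}/(G_i^\top x - h_i)$ depends on $G_{\ell q}$, and the case split in the statement is exactly the dichotomy of whether $G_{\ell q}$ enters only through the denominator $G_\ell^\top x - h_\ell = \sum_r G_{\ell r} x_r - h_\ell$ (the $q \ne j$ case, giving a single quotient-rule term $\mu G_{\ell j} x_q/(h_\ell - G_\ell^\top x)^2$) or also appears as the explicit numerator factor $G_{\ell j}$ (the $q=j$ case, where the product rule contributes the additional $\mu/(h_\ell - G_\ell^\top x)$ term).

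The only step that requires genuine care is the sign bookkeeping: tracking the minimization direction, the sign convention of the barrier coefficient, and the fact that the covering slack $G_i^\top x - h_i$ is the negative of the packing slack $h_i - G_i^\top x$, so that every derivative of the constraint barrier flips sign relative to Section~\ref{sec:packing}; and, within that, applying the product rule correctly in the $q=j$ branch of the mixed partial so that the extra $\mu/(h_\ell - G_\ell^\top x)$ term is captured exactly once and not absorbed into the quotient term. Beyond this sign tracking the calculation is routine and identical in structure to the packing-LP derivation.
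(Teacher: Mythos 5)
Your proof follows exactly the paper's argument: differentiate the first-order optimality condition $f_x(x^*)=0$ with respect to $G$, apply the chain rule to obtain $f_{Gx}(x^*) + f_{xx}(x^*)\,\Del{x^*}{G} = 0$, invert the Hessian, and then compute the entries of $f_{xx}$ and $f_{Gx}$ by direct differentiation of the barrier terms. You additionally justify the invertibility of $f_{xx}(x^*)$ via strict convexity from the logarithmic barrier, a point the paper's proof leaves implicit.
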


\begin{proof}
Since $x^* = \argmax_x f(x,c,G,h)$ is an optimum, $f_x(x^*) = \left.\frac{\partial f(x)}{\partial x}\right|_{x = x^*} = 0$.
Thus, 
\begin{equation*}
    \frac{\partial}{\partial G}f_x(x^*) = 0
\end{equation*}
By the chain rule,
\begin{equation*}
\frac{\partial}{\partial G}f_x(x^*)=f_{Gx}(x^*)+ f_{xx}(x^*)\frac{\partial x^*}{\partial G}
\end{equation*}
Rearranging the aboved equation, we can obtain:
\begin{equation*}
    \frac{\partial x^*}{\partial G} = - f_{xx}(x^*)^{-1} f_{Gx}(x^*)
\end{equation*}
where 
\begin{equation*}
f_{x_{k} x_{j}}(x)=\left\{\begin{array}{c}
\mu x_{j}^{-2} + \mu \sum_{i=1}^{p}G_{ij}^2/(h_i-G_i^\top x)^2, \quad j=k \\
 \mu \sum_{i=1}^{p}G_{ij}G_{ik}/(h_i-G_i^\top x)^2, \quad j \neq k
\end{array}\right.
\end{equation*}
and
\begin{equation*}
f_{G_{\ell q} x_{j}}(x)=\left\{\begin{array}{c}
\mu G_{\ell j} x_{q}/(h_\ell-G_\ell^\top x)^2 + \mu/(h_\ell-G_\ell^\top x), \quad q=j \\
\mu G_{\ell j} x_{q}/(h_\ell-G_\ell^\top x)^2, \quad q \neq j.
\end{array}\right.
\end{equation*}
\end{proof}

\section{Experimental Evaluation}
\label{sec:experiments}

We evaluate the proposed method on 3 benchmarks: a maximum flow transportation problem with unknown edge capacities, an alloy production problem with unknown chemical composition in the raw materials, and a fractional knapsack problem with unknown rewards and weights.
We compare our method with $5$ classical regression methods~\cite{friedman2001elements} including ridge regression (Ridge), $k$-nearest neighbors ($k$-NN), classification and regression tree (CART), random forest (RF), and neural network (NN).
All of these methods train the prediction models with their classic loss function.
We also apply the chosen correction function of each problem to the estimated solutions for these classical regression methods, in order to ensure feasibility, to compute the post-hoc regret.
However, the correction function has nothing to do with the training of these classic methods.
The methods of $k$-NN, RF and NN as well as our method have hyperparameters, which we tune via cross-validation:
for $k$-NN, we tried $k \in \lbrace 1,3,5 \rbrace$;
for RF, we try different numbers of trees in the forest $\{10, 50, 100 \}$;
for both NN and our method, we treat the learning rate, epochs and weight decay as hyperparameters.
The final hyperparameter choices are shown in Table \ref{table:Hyperparameters}. 

\begin{table}[]
\centering
\resizebox{0.7\textwidth}{!}{
\begin{tabular}{|l||c|c|c|}
\hline
\multirow{2}{*}{Model} & \multicolumn{3}{c|}{Hyperaprameters}  \\
\cline{2-4}
                       & Max flow transportation                                      & Alloy production                                             & Fractional knapsack                                          \\
                       \hline
Proposed               & \makecell{optimizer: optim.Adam;\\ learning rate: $5^{-6}$;\\ $\mu=10^{-3}$; epochs=8} & \makecell{optimizer: optim.Adam;\\ learning rate: $5^{-6}$;\\ $\mu=10^{-3}$; epochs=8} & \makecell{optimizer: optim.Adam;\\ learning rate: $5^{-6}$;\\ $\mu=10^{-3}$; epochs=8} \\\hline
$k$-NN                    & \multicolumn{3}{c|}{k=5}                                                                                                                                                                    \\\hline
RF                     & \multicolumn{3}{c|}{n\_estimator=100}                                                                                                                                                       \\\hline
NN                     & \makecell{optimizer: optim.Adam;\\ learning rate: $10^{-3}$;\\ epochs=8}    & \makecell{optimizer: optim.Adam;\\ learning rate: $10^{-3}$;\\ epochs=8}    & \makecell{optimizer: optim.Adam;\\ learning rate: $10^{-3}$;\\ epochs=8}  \\
\hline
\end{tabular}}
\setlength{\abovecaptionskip}{6pt}%
\topcaption{Hyperparameters of the maximum flow transportation, alloy production, and fractional knapsack problems.}
\label{table:Hyperparameters}
\end{table}

Ridge, $k$-NN, CART and RF are implemented using \textit{scikit-learn}~\cite{scikit-learn}. 
The neural network is implemented using \textit{PyTorch}~\cite{NEURIPS2019_9015}.
All models are trained with Intel(R) Xeon(R) CPU @ 2.20GHz.
To compute the optimal solution of an LP under the true parameters, we use the LP solver from \textit{OR-Tools}~\cite{ortools} instead of the solver of Mandi and Guns.



\paragraph{A maximum flow transportation problem with unknown capacities.}
In our first experiment, we formulate a transportation problem as a single-source-single-sink maximum flow problem (MFP).
To formulate it as a packing LP, we use the formulation where the decision variables each correspond to a simple path from the source to the sink.
In this experiment, the unknown parameters are the edge capacities, which is the $h$ vector in the packing LP.
We experiment in a setting where the goal is to use Predict+Optimize to learn which paths we will be using for transport, and proportionally how much flow we will be sending along each path---for example, the prediction is used to apply for permits from a city council for sending a lot of traffic along particular routes.
Given that we are less concerned about predicting the actual flow magnitudes, in this experiment we set the penalty factor $\sigma$ to the all-0s vector.

We conduct experiments on $3$ real-life graphs: POLSKA~\cite{SNDlib10} with 12 vertices and 18 edges, USANet~\cite{lucerna2009efficiency} with 24 vertices and 43 edges, and G\'{E}ANT~\cite{WinNT} with 40 vertices and 61 edges.
Given that we are unable to find datasets specifically for this max-flow problem, we follow the experimental approach of Demirovic et al.~\cite{demirovic2019investigation, demirovic2019predict, demirovic2020dynamic} and use real data from a different problem (the ICON scheduling competition) as numerical values required for our experiment instances.
In this dataset, each unknown edge capacity has $8$ features.
For experiments on POLSKA and USANet, out of the available 789 instances, 610 are used for training and 179 for testing the model performance,
while for experiments on G\'{E}ANT, out of the available 620 instances, 490 are used for training and 130 for testing the model performance.

For both NN and our method, we use a 3-layer fully-connected network with 16 neurons per hidden layer.


\begin{table*}[]
\centering
\resizebox{0.8\textwidth}{!}{
\begin{tabular}{l||c|c|c|c|c|c||c}
\hline
\multicolumn{1}{c||}{PReg} & Proposed          & Ridge                  & $k$-NN          & CART           & RF            & NN & TOV \\
\hline
POLSKA & \textbf{10.00±0.67}          & 11.20±0.73                & 14.39±0.83               & 16.65±1.06               & 12.30±0.90             & 12.18±1.08             & 88.66±1.10              \\
USANet & \textbf{16.64±1.34}          & 19.52±1.16                & 22.89±1.58               & 24.15±1.51               & 22.27±1.34             & 18.62±1.23             & 96.22±1.38              \\
G\'{E}ANT  & \textbf{10.84±1.10}          & 12.47±1.14                & 15.13±1.08               & 17.01±1.59               & 12.52±1.19             & 12.05±1.13             & 98.71±1.98    \\  \hline
\end{tabular}}
\topcaption{Mean post-hoc regrets and standard deviations for the maximum flow transportation problem.}
\label{table:MFP_PR}
\end{table*}

\begin{table*}[]
\centering
\resizebox{0.9\textwidth}{!}{
\begin{tabular}{l||c|c|c|c|c|c}
\hline
\multicolumn{1}{c||}{MSE} & Proposed          & Ridge                  & $k$-NN          & CART           & RF            & NN                   \\
\hline
POLSKA                  & 1.45E+04±2.63E+04 & \textbf{290.75±127.31} & 363.13±120.51 & 474.00±145.07  & 309.94±123.44 & 324.38±132.49 \\
USANet                  & 1.76E+04±2.20E+04 & \textbf{755.54±90.39}  & 913.79±91.48  & 1626.40±195.31 & 779.04±83.86  & 903.86±105.96 \\
G\'{E}ANT                   & 1.62E+04±2.58E+04 & \textbf{700.35±72.66}  & 842.45±75.78  & 1484.84±203.11 & 704.96±76.64  & 828.18±95.18    \\  \hline
\end{tabular}}
\topcaption{Mean square errors and standard deviations for the maximum flow transportation problem.}
\label{table:MFP_MSE}
\end{table*}

\begin{figure}[]
\centering
\includegraphics[width=0.6\textwidth]{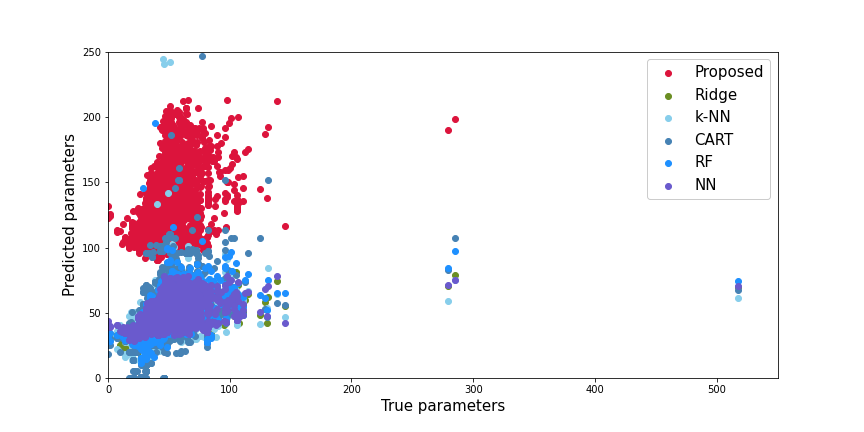}
\topcaption{True parameters vs Predicted parameters.}
\label{fig:groundTrueVSpredictions}
\end{figure}

Tables \ref{table:MFP_PR} and \ref{table:MFP_MSE} report the mean post-hoc regrets and standard deviations across 10 runs, and the mean square errors (MSE) and standard deviations across 10 runs for each approach on the maximum flow transportation problem with unknown capacities respectively.

 
Table \ref{table:MFP_PR} shows that the proposed method achieves the best performance in all cases. 
Compared with other methods, the proposed method obtains at least 10.71\% smaller post-hoc regret on POLSKA, 10.67\% smaller on USANet, and 10.02\% smaller on G\'{E}ANT.
We also report the average True Optimal Values (TOV) in the last column of Table \ref{table:MFP_PR} for reference.
The proposed method achieves 11.49\% relative error on POLSKA, 16.23\% relative error on USANet, and 10.28\% relative error on G\'{E}ANT.

For comparison, we also show a table of the mean squared error (MSE), i.e.~squared $\ell_2$ error, of the predicted parameters, across different methods and graphs, in Table \ref{table:MFP_MSE}.
Even though the goal is to minimize post-hoc regret and it is unreasonable to evaluate our method on the MSE, we present these results anyway for all our experiments, as they help illustrate the behavior of our method.
In this experiment, the MSE of the proposed method is drastically worse than all the other methods, while ridge regression achieves the best performance in all of the cases unsurprisingly since it is explicitly designed to learn in $\ell_2$ error, and RF always achieves the second best performance.
We argue that this is by design: our method optimizes for learning in terms of post-hoc regret, while all the other classical methods learn to minimize in MSE.
There does remain the question of why our method has that bad of an MSE.
Here, we give a scatterplot (Figure~\ref{fig:groundTrueVSpredictions}) of the norm of the predicted parameters versus the true parameters, across all the methods.
As we can see in Figure \ref{fig:groundTrueVSpredictions}, the predicted parameters values of the proposed method are several orders of magnitude higher than the true parameters values, i.e., our method predicting the unknown parameters at several orders of magnitude larger than the true parameters.
The reason for this phenomenon lies in the problem formulation, where we are trying to predict which paths to send flows through, and are less concerned with predicting the precise amount of flow.
As a modelling choice, therefore, we picked the penalty factor $\sigma = 0$.
Note that, since the unknown parameters are the ``$h$" vector in the packing LP, if we scale up the $h$ vector, then the corresponding solutions $x$ are scaled up by the same factor.
Thus, the phenomenon is equivalent to the estimated solution being much larger than the true optimal solution.
This is fine from the Predict+Optimize perspective: the correction function scales down an over-capacity estimated solution, and so the predictor only needs to predict the direction of the solution vector; even if it gives a far-too-large norm, the correction function will fix the magnitude at no cost.
Our learning algorithm appears to have learnt to exploit this correction function, and nonetheless, the estimated solution still gives the desired information---which paths to send flow along in the graph.

If we did care about predicting the actual flow values, then we would set the penalty factor to a non-zero value.
In the next couple of experiments, we explore how the penalty factor affects the performance of our method, in terms of both the post-hoc regret and the MSE of the predicted parameters.
We note again that the penalty factor is a property of the application, and not an algorithmic choice we make.


\paragraph{An alloy production problem with unknown chemical compositions in raw materials.}
In our second experiment, we consider an alloy production problem that is expressible as a covering LP.
An alloy production plant needs to produce a certain amount of a particular alloy, requiring a mixture of $M$ kinds of metals. 
To that end, it must acquire at least $req_m$ tons of each of the $m \in [M]$ metals. 
The raw materials are to be obtained from $K$ suppliers, each supplying a different type of ore.
The ore supplied by site $k \in [K]$ contains a $con_{km} \in [0,1]$ fraction of material $m$ at a price of $cost_k$ per ton.
The objective is to meet the minimum material requirements for each metal, at the minimum cost.
The decision variables $x_k$ are the number of tons of ores to order from each site $k$.
Affected by the uncertainty in the mining process, the metal concentration (\% of the $m \in M$ material per ton) of each ore is unknown, i.e.~$con_{km}$ is unknown, which is the $G$ matrix in the covering LP.

Following the correction function and penalty described in Sections~\ref{sec:covering} and~\ref{sec:packing} respectively, if the estimated solution does not meet the minimum tonnage requirements of any metal, the alloy production plant will scale up its order by a factor of $\lambda \ge 1$ (from Equation~\ref{eq:CF4coveringLP}) across all the suppliers.
On the other hand, for this after-the-fact order, each supplier $k$ will charge a new cost of $(1+\sigma_k)cost_k$ per ton of its ore, instead of the previous cost of $cost_k$.
We experimented on various values of penalty factors $\sigma_k$ and we will report and discuss how the value of $\sigma_k$ affects the performance of the prediction pipeline.
We stress again that the value of $\sigma_k$ is from the application, and not an algorithmic choice.

We conduct experiments on two real alloys: brass and an alloy blend for strengthening Titanium.
For brass, $2$ kinds of metal materials, Cu and Zn, are required~\cite{kabir2010study}, that is $M=2$.
The requirements of the two materials are, proportionally, $req=[627.54, 369.72]$.
For the titanium-strengthening alloy, $4$ kinds of metal materials, C, Al, V, and Fe, are required~\cite{kahraman2005joining}, i.e., $M=4$.
The requirements of the four materials are $req=[0.8, 60, 40, 2.5]$.
Since we could not find any real data on the concentration of metals in ores, we again use real data from a different problem (a knapsack problem \cite{paulus2021comboptnet}) as numerical values in our experiment instances.
In this dataset, each unknown metal concentration is related to 4096 features.
For experiments on both of the two alloys productions, 350 instances are used for training and 150 instances for testing the model performance.

For NN and our method, we use a 5-layer fully connected network with 512 neurons per hidden layer.

We conduct experiments on $5$ types of penalty factor ($\sigma$) settings:
the all-0s vector, and then 4 vectors where each entry is i.i.d.~uniformly sampled from $[0.25\pm 0.015], [0.5 \pm 0.015], [1.0 \pm 0.015]$, and $[2.0\pm 0.015]$ respectively.
This random sampling of $\sigma$ ensures that the penalty factor for each supplier is different, but remain roughly in the same scale.

\begin{table*}[]
\centering
\resizebox{0.95\textwidth}{!}{
\begin{tabular}{l|l||c|c|c|c|c|c||c}
\hline
\multicolumn{2}{c||}{PReg}                               & \multirow{2}{*}{Proposed} & \multirow{2}{*}{Ridge}       & \multirow{2}{*}{$k$-NN}        & \multirow{2}{*}{CART}         & \multirow{2}{*}{RF}          & \multirow{2}{*}{NN}    & \multirow{2}{*}{TOV}               \\ \cline{1-2}
Alloy                  & Penalty factor    &                           &                        &                       &                       &                     &                     &                              \\
\hline
\multirow{5}{*}{Brass}    & 0                                  & \textbf{37.66±4.52}   & 61.93±3.17   & 65.68±5.76   & 87.57±8.83   & 61.40±2.96   & 61.46±6.69   & \multirow{5}{*}{312.02±6.94} \\
                          & 0.25±0.015                         & \textbf{68.16±6.26}   & 75.16±4.48   & 80.11±7.85   & 109.94±10.04 & 74.11±4.14   & 73.93±6.07   &                              \\
                          & 0.5±0.015                          & \textbf{82.91±5.45}   & 88.36±6.24   & 94.52±10.19  & 132.24±11.59 & 86.77±5.81   & 86.36±6.16   &                              \\
                          & 1±0.015                            & \textbf{107.64±6.85}  & 114.80±10.30 & 123.37±15.08 & 176.91±15.55 & 112.16±9.69  & 111.25±8.31  &                              \\
                          & 2±0.015                            & \textbf{150.47±12.99} & 167.64±18.69 & 181.05±25.29 & 266.19±24.29 & 162.91±17.65 & 161.03±15.46 &                              \\
\hline
\multirow{5}{*}{Titanium-alloy} & 0                                  & \textbf{4.07±0.75}    & 6.15±0.67    & 6.51±0.50    & 7.95±0.64    & 5.93±0.63    & 5.87±0.66    & \multirow{5}{*}{30.27±0.54}  \\
                          & 0.25±0.015                         & \textbf{6.45±0.81}    & 7.54±0.81    & 8.03±0.59    & 10.05±0.67   & 7.22±0.75    & 7.14±0.79    &                              \\
                          & 0.5±0.015                          & \textbf{7.90±0.561}    & 8.92±0.96    & 9.56±0.69    & 12.15±0.73   & 8.53±0.88    & 8.52±0.90    &                              \\
                          & 1±0.015                            & \textbf{10.73±0.81}   & 11.69±1.28   & 12.59±0.92   & 16.34±0.87   & 11.12±1.16   & 11.08±1.19   &                              \\
                          & 2±0.015                            & \textbf{14.17±1.31}   & 17.23±1.92   & 18.69±1.41   & 24.72±1.24   & 16.32±1.75   & 16.25±1.72   &         \\                    
\hline                          
\end{tabular}}
\topcaption{Mean post-hoc regrets and standard deviations for the alloy production problem.}
\label{table:delivery_PR}
\end{table*}
\begin{table*}[]
\centering
\resizebox{0.85\textwidth}{!}{
\begin{tabular}{l|l||c|c|c|c|c|c}
\hline
\multicolumn{2}{c||}{MSE}                               & \multirow{2}{*}{Proposed} & \multirow{2}{*}{Ridge}       & \multirow{2}{*}{$k$-NN}        & \multirow{2}{*}{CART}         & \multirow{2}{*}{RF}          & \multirow{2}{*}{NN}                   \\ \cline{1-2}
\multicolumn{1}{l|}{Alloy}                  & Penalty factor    &                           &                              &                              &                               &                              &                                       \\ \hline
\multirow{5}{*}{Brass}    & 0                                  & 395.81±331.56       & \multirow{5}{*}{39.33±0.64} & \multirow{5}{*}{43.68±0.92} & \multirow{5}{*}{73.98±1.74} & \multirow{5}{*}{\textbf{37.43±0.40}} & \multirow{5}{*}{37.80±0.47}          \\
                          & 0.25±0.015                         & 168.27±38.07        &                             &                             &                             &                                      &                                      \\
                          & 0.5±0.015                          & 37.33±0.58          &                             &                             &                             &                                      &                                      \\
                          & 1±0.015                            & \textbf{36.97±0.56} &                             &                             &                             &                                      &                                      \\
                          & 2±0.015                            & 38.22±2.37          &                             &                             &                             &                                      &                                      \\
\hline
\multirow{5}{*}{Titanium-alloy} & 0                                  & 301.41±213.73       & \multirow{5}{*}{38.93±0.32} & \multirow{5}{*}{43.92±0.53} & \multirow{5}{*}{73.82±0.47} & \multirow{5}{*}{37.51±0.33}          & \multirow{5}{*}{\textbf{36.60±0.26}} \\
                          & 0.25±0.015                         & 48.23±7.95          &                             &                             &                             &                                      &                                      \\
                          & 0.5±0.015                          & 44.69±5.74          &                             &                             &                             &                                      &                                      \\
                          & 1±0.015                            & 39.00±2.63          &                             &                             &                             &                                      &                                      \\
                          & 2±0.015                            & 45.28±4.28          &                             &                             &                             &                                      &   \\
\hline
\end{tabular}}
\topcaption{Mean square errors and standard deviations for the alloy production problem.}
\label{table:delivery_MSE}
\end{table*}

Tables \ref{table:delivery_PR} and \ref{table:delivery_MSE} report the mean post-hoc regrets and standard deviations across 10 runs, and the mean square errors (MSE) and standard deviations across 10 runs for each approach on the alloy production problem with unknown metal concentrations, across the different scales of penalty factor $\sigma$.

When the penalty factor is 0, our method improves the solution quality substantially, obtaining at least 38.67\% smaller post-hoc regret than the other methods in brass production, and at least 30.73\% smaller post-hoc regret in titanium-alloy production.
When the penalty factor is non-zero as given in the last paragraph, our method obtains at least 7.80\%, 3.99\%, 3.24\%, and 6.56\% smaller post-hoc regret respectively in brass production, 
and at least 9.65\%, 7.30\%, 3.14\%, and 12.82\% smaller post-hoc regret respectively in titanium-alloy production.
The results suggest that the advantages of the proposed method on solution quality first decreases and then increases as the penalty factor $\sigma$ grows.
The average True Optimal Values (TOV) are reported in the last column of Table \ref{table:delivery_PR}.
The relative errors of all the methods grow larger when the penalty factor grows larger. 
For example, the relative errors of the proposed method are 11.77\%, 20.77\%, 26.57\%, 34.34\%, and 47.03\% on brass production when the penalty factors are all zero, or are sampled from $[0.25\pm 0.015], [0.5 \pm 0.015], [1.0 \pm 0.015]$, $[2.0\pm 0.015]$ respectively.

We show also the MSE of the predicted parameters in Table \ref{table:delivery_MSE}, across different methods.
As discussed in the previous max-flow experiment, when the penalty is 0, the MSE for our method can be very large as the lack of penalty gets exploited by the method.
Interestingly, as we observe in Table \ref{table:delivery_MSE}, the MSE for our method first decreases and then increases as $\sigma$ grows (the growth at the end is slightly difficult to read on this plot).
Here we explain why the MSE values of the proposed method may increase when the penalty term grows too large.
When the penalty is non-zero but somewhat small, it acts as a regularizer to prevent our method from exploiting the correction function as in the previous experiment.
On the other hand, as the penalty increases, the post-hoc regret becomes dominated by the penalty term.
As such, our method is strongly disincentivized to use \emph{any} correction whatsoever.
Therefore, when $\sigma$ is large, our method tends to be conservative and always predicts parameters that make the estimated solution a bit too large.
This explains why the MSE of the predicted parameters gets bigger again (albeit not by much) as $\sigma$ increases.


\paragraph{Fractional knapsack problem with unknown prices and weights.}
The last experiment is on the fractional knapsack problem with unknown rewards and weights.
The unknown parameters appear in both objective ``$c$" and constraints ``$G$" of the packing LP.
In our setting, word descriptions of a collection of $M$ infinitely-divisible items is presented to the algorithm, from which the weight $w_i$ and reward $c_i$ of each item $i$ need to be predicted. 
The player’s goal is to maximize the total reward of (fractionally) selected items without exceeding a known fixed capacity of the knapsack.
We use the dataset of Paulus et al.~\cite{paulus2021comboptnet}, in which each fractional knapsack instance consists of $10$ items and each item has $4096$ features  related to its reward and weight.

For both NN and our method, we use a 5-layer fully-connected network with 512 neurons per hidden layer.

In line with the choice of correction function and penalty in Section~\ref{sec:packing}, if the estimated solution violates the capacity constraint, items will need to be removed at a penalty and in a proportional manner (i.e.~the over-capacity knapsack is scaled down).
If the change in the amount of item $i$ is $\Delta_i$, then the penalty for this removal is $\sigma_i c_i \Delta_i$.

We conduct experiments on $4$ different capacities: 50, 100, 150, and 200.
We use 700 instances for training and 300 instances for testing the model performance.
Identically to the second experiment, we use $5$ scales of penalty factors: all-0s penalty, and penalty factor $\sigma$ with i.i.d.~entries drawn uniformly from $[0.25 \pm 0.015], [0.5 \pm 0.015], [1.0\pm 0.015]$, and $[2.0 \pm 0.015]$. 

\begin{table*}[]
\centering
\resizebox{0.9\textwidth}{!}{
\begin{tabular}{l|l||c|c|c|c|c|c||c}
\hline
\multicolumn{2}{c||}{PReg}                               & \multirow{2}{*}{Proposed} & \multirow{2}{*}{Ridge}       & \multirow{2}{*}{$k$-NN}        & \multirow{2}{*}{CART}         & \multirow{2}{*}{RF}          & \multirow{2}{*}{NN}    & \multirow{2}{*}{TOV}               \\ \cline{1-2}
Capacity                  & Penalty factor    &                           &                        &                       &                       &                     &                     &                              \\
\hline
\multirow{5}{*}{50}  & 0          & \textbf{35.36±0.51}       & 38.00±0.89             & 36.95±1.04            & \textbf{35.53±0.71}   & 37.90±0.65          & 39.75±1.18          & \multirow{5}{*}{90.79±0.46}  \\
                     & 0.25±0.015 & \textbf{38.17±0.76}       & 39.17±0.86             & \textbf{38.46±0.96}   & 38.85±0.75            & 38.87±0.58          & 40.51±1.03          &                              \\
                     & 0.5±0.015  & \textbf{39.57±0.85}       & 40.33±0.83             & 39.97±0.90            & 42.16±0.82            & \textbf{39.85±0.53} & 41.26±0.90          &                              \\
                     & 1.0±0.015  & \textbf{41.90±0.85}       & 42.65±0.82             & 42.99±0.84            & 48.80±1.04            & \textbf{41.99±0.47} & 42.77±0.71          &                              \\
                     & 2.0±0.015  & \textbf{44.92±0.91}       & 47.30±0.90             & 49.03±1.00            & 62.08±1.63            & 45.71±0.63          & 45.79±0.86          &                              \\
\hline
\multirow{5}{*}{100} & 0          & \textbf{45.66±0.66}       & 49.52±1.29             & 48.20±1.31            & 48.08±0.75            & 49.85±1.31          & 52.19±1.84          & \multirow{5}{*}{156.46±0.79} \\
                     & 0.25±0.015 & \textbf{49.97±0.86}       & 51.12±1.22             & \textbf{50.38±1.14}   & 51.88±0.71            & 51.19±1.23          & 53.25±1.56          &                              \\
                     & 0.5±0.015  & \textbf{52.27±0.66}       & 52.73±1.17             & \textbf{52.36±1.08}   & 55.66±0.75            & 52.53±1.15          & 54.31±1.32          &                              \\
                     & 1.0±0.015  & \textbf{55.71±1.12}       & 55.93±1.15             & 56.23±0.98            & 63.25±1.01            & \textbf{55.74±0.63} & 56.44±1.05          &                              \\
                     & 2.0±0.015  & \textbf{58.88±0.79}       & 62.35±1.36             & 64.25±0.97            & 78.42±1.82            & 60.57±0.93          & 60.69±1.66          &                              \\
\hline
\multirow{5}{*}{150} & 0          & \textbf{42.01±0.37}       & 47.56±1.08             & 46.16±1.13            & 46.91±0.67            & 48.09±0.97          & 49.78±2.02          & \multirow{5}{*}{207.92±0.99} \\
                     & 0.25±0.015 & \textbf{46.59±0.23}       & 49.37±1.02             & 48.37±1.04            & 50.49±0.66            & 49.68±0.87          & 51.08±1.58          &                              \\
                     & 0.5±0.015  & \textbf{50.25±0.59}       & 51.20±0.98             & \textbf{50.58±0.97}   & 54.07±0.74            & 51.27±0.79          & 52.38±1.19          &                              \\
                     & 1.0±0.015  & \textbf{54.07±0.66}       & 54.83±1.01             & 54.99±0.95            & 61.23±1.07            & 54.44±0.69          & \textbf{54.97±0.86} &                              \\
                     & 2.0±0.015  & \textbf{58.40±0.63}       & 62.11±1.38             & 63.81±1.31            & 75.55±1.96            & 60.78±0.84          & 60.54±2.15          &                              \\
\hline
\multirow{5}{*}{200} & 0          & \textbf{25.70±0.36}       & 33.07±0.98             & 32.73±0.92            & 33.18±0.88            & 33.63±0.84          & 34.67±2.13          & \multirow{5}{*}{246.86±1.20} \\
                     & 0.25±0.015 & \textbf{31.50±0.50}       & 34.91±0.92             & 34.91±0.89            & 36.36±0.83            & 35.33±0.80          & 36.19±1.55          &                              \\
                     & 0.5±0.015  & \textbf{35.08±0.69}       & 36.76±0.90             & 37.10±0.91            & 39.55±0.89            & 37.03±0.81          & 37.71±1.09          &                              \\
                     & 1.0±0.015  & \textbf{39.54±0.45}       & 40.45±0.98             & 41.47±1.06            & 45.92±1.22            & 40.42±0.92          & 40.76±1.20          &                              \\
                     & 2.0±0.015  & \textbf{44.59±0.55}       & 47.83±1.44             & 50.22±1.66            & 58.65±2.22            & 47.20±1.39          & 46.85±3.58          &                              \\
\hline
\end{tabular}}
\topcaption{Mean post-hoc regrets and standard deviations for the fractional knapsack problem.}
\label{table:knapsack_PR}
\end{table*}
\begin{table*}[]
\centering
\resizebox{0.85\textwidth}{!}{
\begin{tabular}{ll||c|c|c|c|c|c}
\hline
\multicolumn{2}{c||}{MSE}                               & \multirow{2}{*}{Proposed} & \multirow{2}{*}{Ridge}       & \multirow{2}{*}{$k$-NN}        & \multirow{2}{*}{CART}         & \multirow{2}{*}{RF}          & \multirow{2}{*}{NN}                   \\ \cline{1-2}
\multicolumn{1}{l|}{Capacity}                  & Penalty factor    &                           &                              &                              &                               &                              &                                       \\ \hline
\multicolumn{1}{l|}{\multirow{5}{*}{50}}  & 0          & 190.05±3.89               & \multirow{20}{*}{75.40±0.65} & \multirow{20}{*}{83.47±0.77} & \multirow{20}{*}{140.51±1.75} & \multirow{20}{*}{72.04±0.73} & \multirow{20}{*}{\textbf{71.66±0.58}} \\  
\multicolumn{1}{l|}{}                     & 0.25±0.015 & 79.75±2.33                &                              &                              &                               &                              &                                       \\  
\multicolumn{1}{l|}{}                     & 0.5±0.015  & 79.52±2.69                &                              &                              &                               &                              &                                       \\  
\multicolumn{1}{l|}{}                     & 1.0±0.015  & 75.30±1.20                &                              &                              &                               &                              &                                       \\  
\multicolumn{1}{l|}{}                     & 2.0±0.015  & 72.88±1.31                &                              &                              &                               &                              &                                       \\ \cline{1-3}
\multicolumn{1}{l|}{\multirow{5}{*}{100}} & 0          & 162.95±22.35              &                              &                              &                               &                              &                                       \\  
\multicolumn{1}{l|}{}                     & 0.25±0.015 & 83.28±3.32                &                              &                              &                               &                              &                                       \\  
\multicolumn{1}{l|}{}                     & 0.5±0.015  & 77.57±1.62                &                              &                              &                               &                              &                                       \\  
\multicolumn{1}{l|}{}                     & 1.0±0.015  & 72.28±1.08                &                              &                              &                               &                              &                                       \\  
\multicolumn{1}{l|}{}                     & 2.0±0.015  & \textbf{71.57±0.58}       &                              &                              &                               &                              &                                       \\ \cline{1-3}
\multicolumn{1}{l|}{\multirow{5}{*}{150}} & 0          & 161.92±22.60              &                              &                              &                               &                              &                                       \\  
\multicolumn{1}{l|}{}                     & 0.25±0.015 & 80.33±2.05                &                              &                              &                               &                              &                                       \\  
\multicolumn{1}{l|}{}                     & 0.5±0.015  & 80.96±1.98                &                              &                              &                               &                              &                                       \\  
\multicolumn{1}{l|}{}                     & 1.0±0.015  & 75.39±1.15                &                              &                              &                               &                              &                                       \\  
\multicolumn{1}{l|}{}                     & 2.0±0.015  & \textbf{71.35±0.66}       &                              &                              &                               &                              &                                       \\ \cline{1-3}
\multicolumn{1}{l|}{\multirow{5}{*}{200}} & 0          & 151.91±35.18              &                              &                              &                               &                              &                                       \\  
\multicolumn{1}{l|}{}                     & 0.25±0.015 & 79.12±3.26                &                              &                              &                               &                              &                                       \\  
\multicolumn{1}{l|}{}                     & 0.5±0.015  & 75.40±1.50                &                              &                              &                               &                              &                                       \\  
\multicolumn{1}{l|}{}                     & 1.0±0.015  & \textbf{70.62±1.03}       &                              &                              &                               &                              &                                       \\  
\multicolumn{1}{l|}{}                     & 2.0±0.015  & \textbf{71.58±0.66}       &                              &                              &                               &                              &                                       \\ \hline
\end{tabular}}
\topcaption{Mean square errors and standard deviations for the fractional knapsack problem.}
\label{table:knapsack_MSE}
\end{table*}

Table \ref{table:knapsack_PR} shows the post-hoc regrets of the different methods across the different scales of penalty factors.
The performance of the proposed method is at least as good as other classical approaches when the capacity is 50, 100, or 150, and is consistent better than others when the capacity is 200.
Observing a similar trend as in the alloy production experiment, the improvements of our method over other classical methods, in terms of the post-hoc regret, first decreases and then increases as the penalty factor $\sigma$ grows. 
The relative errors of all the methods grow smaller when the capacity grows larger, for example, the relative errors of the proposed method are around 38-49\%, 29-37\%, 20-28\%, 10-18\% when the capacity is 50, 100, 150, and 200 respectively.

As in the previous experiments, we also compare the MSE of the parameters predicted by our method against the other methods, as shown in Table \ref{table:knapsack_MSE}.
Similar to the other experiments, when the penalty term is zero, the predicted parameters of the proposed method are shifted by several orders of magnitude from the true parameters (the post-hoc regret is small but the MSE value is large).
Then, as $\sigma$ grows, the MSE of our method decreases to roughly the same as the other methods, before growing slightly again as $\sigma$ becomes large and the predictor learnt from our method becomes conservative.


\paragraph{Runtime Analysis}
Table \ref{table:runtime} shows the average runtime across 10 simulations for different optimization problems.
In the alloy production problem and the fractional knapsack problem, the runtimes of the proposed method are comparable to NN, and are much better than RF.
In the maximum flow transportation problem, the runtimes of the proposed method are comparable to NN in POLSKA and G\'{E}ANT, but the runtime of the proposed method is large in USANet.
The reason is that we use the formulation where the decision variables each correspond to a simple path from the source to the sink.
Thus, when the number of paths is large (the number of paths in USANet is 242), the number of the decision variables of the LP is large and the LP requires more time to be solved.

\begin{table*}[h]
\centering
\resizebox{0.9\textwidth}{!}{
\begin{tabular}{|l|ccc|cc|cccc|}
\hline
{ }                             & \multicolumn{3}{c|}{Maximum   flow transportation}                                   & \multicolumn{2}{c|}{Alloy   production}      & \multicolumn{4}{c|}{Fractional   knapsack}                                                                              \\ \cline{2-10} 
\multirow{-2}{*}{{ Runtime(s)}} & \multicolumn{1}{c|}{POLSKA}       & \multicolumn{1}{c|}{USANet}       & G\'{E}ANT        & \multicolumn{1}{c|}{Brass}  & Titanium-alloy & \multicolumn{1}{c|}{Capacity=50} & \multicolumn{1}{c|}{Capacity=100} & \multicolumn{1}{c|}{Capacity=150} & Capacity=200 \\ \hline
{ Proposed}                     & \multicolumn{1}{c|}{18.65}        & \multicolumn{1}{c|}{132.22}       & 15.48        & \multicolumn{1}{c|}{228.00} & 331.38         & \multicolumn{1}{c|}{131.49}      & \multicolumn{1}{c|}{132.89}       & \multicolumn{1}{c|}{139.44}       & 132.37       \\ \hline
{ Ridge}                        & \multicolumn{1}{c|}{\textless{}1} & \multicolumn{1}{c|}{\textless{}1} & \textless{}1 & \multicolumn{1}{c|}{20.22}  & 56.89          & \multicolumn{4}{c|}{22.33}                                                                                              \\ \hline
{ $k$-NN}                          & \multicolumn{1}{c|}{\textless{}1} & \multicolumn{1}{c|}{\textless{}1} & \textless{}1 & \multicolumn{1}{c|}{25.14}  & 70.22          & \multicolumn{4}{c|}{26.00}                                                                                              \\ \hline
CART                                                & \multicolumn{1}{c|}{\textless{}1} & \multicolumn{1}{c|}{\textless{}1} & \textless{}1 & \multicolumn{1}{c|}{30.33}  & 94.89          & \multicolumn{4}{c|}{34.83}                                                                                              \\ \hline
RF                                                  & \multicolumn{1}{c|}{4.11}         & \multicolumn{1}{c|}{11.00}        & 11.89        & \multicolumn{1}{c|}{959.50} & 2552.25        & \multicolumn{4}{c|}{1034.07}                                                                                            \\ \hline
NN                                                  & \multicolumn{1}{c|}{10.33}        & \multicolumn{1}{c|}{12.82}        & 13.89        & \multicolumn{1}{c|}{212.22} & 321.11         & \multicolumn{4}{c|}{135.80}                                                                                             \\ \hline
\end{tabular}}
\topcaption{Average runtime (in seconds) for the maximum flow transportation, alloy production, and fractional knapsack problems.}
\label{table:runtime}
\end{table*}

\section{Summary}

We proposed the first Predict+Optimize framework addressing the scenario where the constraints may contain unknown parameters.
Specifically, we introduced the novel notions of correction function and post-hoc regret into the framework.
Algorithmically, we focused on packing and covering linear programs---a large and widely-studied class of problems---and presented a method to train parameter predictors in our novel framework.
Empirical results in 3 benchmarks demonstrate better prediction performance of our method over 5 classical methods which do not take the correction function into account during training.


\bibliographystyle{unsrt}  
\bibliography{references}

\end{document}